\def\eqref#1{equation~\ref{#1}}
\def\1{\bm{1}}
\DeclareMathAlphabet{\mathsfit}{\encodingdefault}{\sfdefault}{m}{sl}
\SetMathAlphabet{\mathsfit}{bold}{\encodingdefault}{\sfdefault}{bx}{n}
\newcommand{\E}{\mathbb{E}}
\DeclareMathOperator*{\argmax}{arg\,max}
\DeclareMathOperator*{\argmin}{arg\,min}
\newcommand{\kibitz}[2]{\ifnum\Comments=1{\textcolor{#1}{\textsf{\footnotesize #2}}}\fi}
\newcommand{\cD}{\mathcal{D}}
\newcommand{\cE}{\mathcal{E}}
\newcommand{\cH}{\mathcal{H}}
\newcommand{\cN}{\mathcal{N}}
\newcommand{\cS}{\mathcal{S}}
\newcommand{\cA}{\mathcal{A}}
\newcommand{\Qth}{Q^{\theta}}
\newcommand{\BR}{\mathfrak{BR}}
\newcommand{\psrl}{\text{PSRL}}
\newcommand{\rlsvi}{\text{RLSVI}}
\newcommand{\ipsrl}{{\tt iPSRL}}
\newcommand{\irlsvi}{{\tt iRLSVI}}
\newcommand{\pirlsvi}{{\tt piRLSVI}}
\newcommand{\urlsvi}{{\tt uRLSVI}}
\theoremstyle{plain}
\newtheorem{theorem}{Theorem}[section]
\newtheorem{lemma}[theorem]{Lemma}
\theoremstyle{definition}
\newtheorem{assumption}[theorem]{Assumption}
\theoremstyle{remark}
\newtheorem{remark}[theorem]{Remark}
\newcommand\peqref[1]{Eq.~(\ref{#1})}
\begin{document}
\title{ \bf Bridging Imitation and Online Reinforcement Learning: An Optimistic Tale}
\author[1]{Botao Hao}
\author[2]{Rahul Jain}
\author[2]{Dengwang Tang}
\author[1]{Zheng Wen \thanks{Alphabetical order. Corresponding to Rahul Jain: ahul.jain@usc.edu.}}
\affil[1]{Google Deepmind}
\affil[2]{University of Southern California}

\maketitle

\begin{abstract}
In this paper, we address the following problem: Given an offline demonstration dataset from an imperfect expert, what is the best way to leverage it to bootstrap online learning performance in MDPs. We first propose an Informed Posterior Sampling-based RL (iPSRL) algorithm that uses the offline dataset, and information about the expert's behavioral policy used to generate the offline dataset. Its cumulative Bayesian regret goes down to zero exponentially fast in $N$, the offline dataset size if the expert is competent enough. Since this algorithm is computationally impractical, we then propose the iRLSVI~algorithm that can be seen as a combination of the RLSVI~algorithm for online RL, and imitation learning. Our empirical results show that the proposed iRLSVI~algorithm is able to achieve significant reduction in regret as compared to two baselines: no offline data, and offline dataset but used without suitably modeling the generative policy.
\textcolor{black}{Our algorithm can be seen as bridging online RL and imitation learning.}
\end{abstract}

\section{Introduction}
\label{sec:intro}

An early vision of the Reinforcement Learning (RL) field is to design a learning agent that when let loose in an unknown environment, learns by interacting with it. Such an agent starts with a blank slate (with possibly, arbitrary initialization), takes actions, receives state and reward observations, and thus learns by ``reinforcement''. This remains a goal but at the same time, it is recognized that in this paradigm  learning is too slow, inefficient and often impractical. Such a learning agent takes too long to learn near-optimal policies way beyond practical time horizons of interest. Furthermore, deploying an agent that learns by exploration over long time periods may simply be impractical.


In fact, reinforcement learning is often deployed to solve complicated engineering problems by first collecting  offline data using a behavioral policy, and then using off-policy reinforcement learning, or imitation learning methods (if the goal is to imitate the policy that generated the offline dataset)   on such datasets to learn a policy. This often suffers from the \textcolor{black}{\textit{distribution-shift} problem}, i.e., the learnt policy upon deployment often performs poorly on out-of-distribution state-action space. Thus, there is a need for adaptation and fine-tuning upon deployment.

In this paper, we propose a systematic way to use offline datasets to bootstrap online RL algorithms. Performance of online learning agents is often measured in terms of cumulative (expected) regret. We show that, as expected, there is a gain in performance (reflected in reduction in cumulative regret) of the learning agent as compared to when it did not use such an offline dataset. We call such an online learning agent as being \textit{partially informed}. However, somewhat surprisingly, if the  agent is further  informed about the behavioral policy that generated the offline dataset, such an \textit{informed (online learning) agent}  can do substantially better, reducing cumulative regret significantly. In fact, we also show that if the behavioral policy is suitably parameterized by a \textit{competence parameter}, wherein the behavioral policy is asymptotically the optimal policy, then the higher the ``competence'' level, the better the performance in terms of regret reduction over the baseline case of no offline dataset.

We first propose an ideal (informed) $\ipsrl$~(posterior sampling-based RL) algorithm and show via theoretical analysis that under some mild assumptions, its expected cumulative regret is bounded as $\tilde O(\sqrt{T})$ where $T$ is the number of episodes. In fact, we show that if the competence of the expert is high enough (quantified in terms of a parameter we introduce), the regret goes to zero exponentially fast as $N$, the offline dataset size grows. This is accomplished through a novel prior-dependent regret analysis of the \psrl~algorithm, the first such result to the best of our knowledge. Unfortunately, posterior updates in this algorithm can be computationally impractical. Thus, we introduce a Bayesian-bootstrapped algorithm for approximate posterior sampling, called the (informed) $\irlsvi$~algorithm (due to its commonality with the \rlsvi~algorithm introduced in \cite{osband2019deep}). The $\irlsvi$~algorithm involves optimizing a loss function that is an \textit{optimistic} upper bound on the loss function for MAP estimates for the unknown parameters. Thus, while inspired by the posterior sampling principle, it also has an optimism flavor to it. Through, numerical experiments, we show that the $\irlsvi$~algorithm performs substantially better than both the partially informed-\rlsvi~(which uses the offline dataset naively) as well as the uninformed-\rlsvi~algorithm (which doesn't use it at all). 

We also show that the $\irlsvi$~algorithm can be seen as bridging online reinforcement learning with imitation learning since its loss function can be seen as a combination of an online learning term as well as an imitation learning term. And if there is no offline dataset, it essentially behaves like an online RL algorithm. Of course, in various regimes in the middle it is able to interpolate seamlessly. 


\textbf{Related Work.} Because of the surging use of offline datasets for pre-training (e.g., in Large Language models (LLMs), e.g., see \cite{brown2020language,thoppilan2022lamda,hoffmann2022training}), there has been a lot of interest in Offline RL, i.e., RL using offline datasets \citep{levine2020offline}. A fundamental issue this literature addresses is RL algorithm design \citep{nair2020awac,kostrikov2021offline,kumar2020conservative,nguyenprovably,fujimoto2019off,fujimoto2021minimalist,ghosh2022offline} and analysis to best address the ``out-of-distribution'' (OOD) problem, i.e., policies learnt from offline datasets may not perform so well upon deployment. The dominant design approach is based on `pessimism' \citep{jin2021pessimism,xie2021bellman,rashidinejad2021bridging} which often results in conservative performance in practice. Some of the theoretical literature  \citep{xie2021bellman,rashidinejad2021bridging,uehara2021pessimistic,agarwal2022model} has focused on investigation of sufficient conditions such as ``concentrability measures'' under which such offline RL algorithms can have  guaranteed performance. Unfortunately, such measures of offline dataset quality are hard to compute, and of limited practical relevance \citep{argenson2020model,nair2020awac,kumar2020conservative,levine2020offline,kostrikov2021offline,wagenmaker2022leveraging}.

There is of course, a large body of literature on online RL \citep{dann2021provably,tiapkin2022optimistic,ecoffet2021first,guo2022byol,ecoffet2019go,osband2019deep} with two dominant design philosophies:  Optimism-based algorithms such as UCRL2 in \cite{auer2008near}, and Posterior Sampling (PS)-type algorithms such as PSRL~\citep{osband2013more,ouyang2017learning}, etc.  \citep{osband2019deep,osband2016deep,russo2018learning,zanette2017information,hao2022regret}. However, none of these algorithms consider starting the learning agent with an offline dataset. Of course, imitation learning \citep{hester2018deep,beliaev2022imitation,schaal1996learning} is exactly concerned with learning the expert's behavioral policy (which may not be optimal) from the offline datasets but with no online finetuning of the policy learnt.  Several  papers have actually studied bridging offline RL and imitation learning \citep{ernst2005tree,kumar2022should,rashidinejad2021bridging,hansen2022modem,vecerik2017leveraging,lee2022offline}. Some have also studied offline RL followed by a small amount of policy fine-tuning \citep{song2022hybrid,fang2022planning,xie2021policy,wan2022safe,schrittwieser2021online,ball2023efficient,uehara2021pessimistic,xie2021policy,agarwal2022model} with the goal of finding policies that optimize simple regret. 


The problem we study in this paper is motivated by a similar question: Namely, given an offline demonstration dataset from an imperfect expert, what is the best way to leverage it to bootstrap online learning performance in MDPs? However, our work is different in that it focuses on cumulative regret as a measure of online learning performance. This requires smart exploration strategies while making maximal use of the offline dataset to achieve the best regret reduction possible over the case when an offline dataset is not available. Of course, this will depend on the quality and quantity of the demonstrations. And the question is what kind of algorithms can one devise to achieve this objective, and what information about the offline dataset-generation process is helpful? What is the best regret reduction that is achievable by use of offline datasets? How it depends on the quality and quantity of demonstrations, and what algorithms can one devise to achieve them? And does any information about the offline-dataset generation process help in regret reduction? We answer some of these questions in this paper.

\section{Preliminaries}
\label{sec:prelims}

\paragraph{Episodic Reinforcement Learning.}  Consider a scenario where an agent repeatedly interacts with an environment modelled as a finite-horizon MDP, and refer to each interaction as an episode. The finite-horizon MDP is represented by a tuple ${\mathcal M} = (\cS, \cA, P, r, H, \nu)$, where $\cS$ is a finite state space (of size $S$), $\cA$ is a finite action space (of size $A$), $P$ encodes the transition probabilities, $r$ is the reward function, $H$ is the time horizon length, and $\nu$ is the initial state distribution. The interaction protocol is as follows: at the beginning of each episode $t$, the initial state $s^t_0$ is independently drawn from $\nu$. Then, at each period $h=0,1, \ldots, H-1$ in episode $t$, if the agent takes action $a^t_h \in \cA$ at the current state $s^t_h \in \cS$, then it will receive a reward $r_h(s^t_h, a^t_h)$ and transit to the next state $s^t_{h+1} \in  P_h(\cdot | s^t_h, a^t_h)$. An episode terminates once the agent arrives at state $s^t_H$ in period $H$ and receives a reward $r_H(s^t_H)$. We abuse notation for the sake of simplicity, and just use $r_H(s^t_H, a^t_H)$ instead of $r_H(s^t_H)$, though no action is taken at period $H$. The objective is to maximize its expected total reward over $T$ episodes.
%


Let $Q^*_h$ and $V^*_h$ respectively denote the optimal state-action value and state value functions at period $h$. Then, the Bellman equation for MDP ${\mathcal M}$ is
%
\begin{equation}
Q_{h}^*(s,a) = r_h(s,a) +  \sum_{s'}P_h (s'|s,a)V_{h+1}^*(s'),
\label{eq:bellman-exact}
\end{equation}
where $V_{h+1}^*(s') := \max_b Q_{h+1}^*(s',b)$, if $h< H-1$ and $V^*_{h+1}(s')=0$, if $h=H-1$. We define a policy $\pi$ as a mapping from a state-period pair to a probability distribution over the action space $A$.  
A policy $\pi^*$ is optimal if $\pi_h^*(\cdot|s) \in \arg \max_{\pi_h} \sum_a Q_h^*(s,a)\pi_h(a|s)$ for all $s \in \cS$ and all $h$.


\paragraph{Agent's Prior Knowledge about ${\mathcal M}$.} 
We assume that the agent does not fully know the environment ${\mathcal M}$; otherwise, there is no need for learning and this problem reduces to an optimization problem. However, the agent usually has some prior knowledge about the unknown part of ${\mathcal M}$. For instance, the agent might know that ${\mathcal M}$ lies in a low-dimensional subspace, or may have a prior distribution over ${\mathcal M}$. We use the notation ${\mathcal M}(\theta)$ where $\theta$ parameterizes the unknown part of the MDP. When we want to emphasize it as a random quantity, we will denote it by $\theta^*$ with a prior distribution ${\mu}_{0}$. Of course, different assumptions about the agent's prior knowledge lead to different problem formulations and algorithm designs. As a first step, we consider two canonical settings:
\begin{itemize}
    \item \textbf{Tabular RL:} The agent knows $\cS, \cA, r, H$ and $\nu$, but does not know $P$. That is, $\theta^*=P$ in this setting. We also assume that the agent has a prior over $P$, and this prior is independent across state-period-action triples. 
    \item \textbf{Linear value function generalization:} The agent knows $\cS, \cA, H$ and $\nu$, but does not know $P$ and $r$. Moreover, the agent knows that for all $h$, $Q^*_h$ lies in a low-dimensional subspace $\mathrm{span}(\Phi_h)$, where $\Phi_h \in \Re^{|S||A| \times d}$ is a known matrix. In other words, $Q_h^* = \Phi_h \theta^*_h$ for some $\theta^*_h \in \Re^d$. Thus, in this setting $\theta^*=\left[\theta^{* \top}_0, \ldots, \theta^{* \top}_{H-1} \right]^\top$. We also assume that the agent has a Gaussian prior over $\theta^*$.
\end{itemize}
As we will discuss later, the insights developed in this paper could potentially be extended to more general cases.


\paragraph{Offline Datasets.} We denote an \textit{offline dataset} with $L$ episodes as $\mathcal{D}_0 = \{(\bar{s}_0^l,\bar{a}_0^l,\cdots,\bar{s}_H^l)_{l=1}^{L}\}$, where  $N=HL$ denotes the dataset size in terms of number of observed transitions. For the sake of simplicity, we assume we have complete trajectories in the dataset but it can easily be generalized if not. We denote an  \textit{online dataset} with $t$ episodes as
$\mathcal{H}_t = \{(s_{0}^l,a_{0}^l,\cdots,s_{H}^l)_{l=1}^{t}\}$ and 
$\mathcal{D}_t = \mathcal{D}_0 \oplus \mathcal{H}_t$. 

\paragraph{The Notion of Regret.}

A online learning algorithm $\phi$ is a map for each episode $t$, and time $h$, $\phi_{t,h}: \mathcal{D}_t \to \Delta_A$, the probability simplex over actions.  We define the Bayesian regret of an online learning algorithm $\phi$ over $T$ episodes as 
\begin{equation*}
\BR_T(\phi) :=  \mathbb E\left[\sum_{t=1}^T  \left(V^*_0(s_0^t; \theta^*) - \sum_{h=0}^{H} r_h(s_h^t,a_h^t)\right)\right]\,,
\end{equation*}
where the $(s_h^t,a_h^t)$'s are the state-action tuples from using the learning algorithm $\phi$, and the expectation is over the sequence induced by the interaction of the learning algorithm and the environment, the prior distributions over the unknown parameters $\theta^*$ and the offline dataset $\cD_0$.  

\paragraph{Expert's behavioral policy and competence.} We assume that the expert that generated the offline demonstrations may not be perfect, i.e., the actions it takes are only approximately optimal with respect to the optimal $Q$-value function. To that end, we model the expert's policy by use of the following generative model,
\begin{equation}
\pi_h^{\beta}(a|s) = \frac{\exp(\beta(s) Q_h^*(s,a))}{\sum_a \exp(\beta(s) Q_h^*(s,a))},
\label{eq:expert}    
\end{equation}
where $\beta(s) \geq 0$ is called the \textit{state-dependent deliberateness} parameter, e.g., when $\beta(s) = 0$, the expert behaves naively in state $s$, and takes actions uniformly randomly. When $\beta(s) \to \infty$, the expert uses the optimal  policy when in state $s$. When $\beta(\cdot)$ is unknown, we will assume an independent exponential prior for the sake of analytical simplicity, $f_2(\beta(s)) = \lambda_2 \exp (-\lambda_2\beta(s))$ over $\beta(s)$ where $\lambda_2 > 0$ is the same for all $s$. In our experiments, we will regard $\beta(s)$ as being the same for all states, and hence a single parameter. 

The above assumes the expert is knowledgeable about $Q^*$. However, it may know it only approximately. To model that, we introduce a \textit{knowledgeability} parameter $\lambda \geq 0$. The expert then knows
$\tilde{Q}$ which is distributed as $\cN(Q^*, \mathbb I / \lambda^2)$ conditioned on $\theta$,  and selects actions according to the softmax policy \peqref{eq:expert}, with the $Q^*$ replaced by $\tilde{Q}$. The two parameters $(\beta,\lambda)$ together will be referred to as the \textit{competence} of the expert.
In this case, we denote the expert's policy as $\pi_h^{\beta, \lambda}$.

\begin{remark}
While the form of the generative policy in \peqref{eq:expert} seems specific, $\pi_h^{\beta}(\cdot|s)$ is a random vector with support over the entire probability simplex. In particular, if one regards $\beta(s)$ and $\tilde{Q}_h(s,\cdot)$ as parameters that parameterize the policy, the softmax policy structure as in  \peqref{eq:expert} is enough to realize any stationary policy. 
\end{remark}

Furthermore, we note that our main objective here is to yield clear and useful insights when information is available to be able to model the expert's behavioral policy with varying competence levels. Other forms of generative policies can also be used including $\epsilon$-optimal policies introduced in \citep{beliaev2022imitation}, and the framework  extended. 

\section{The Informed \psrl~Algorithm}
\label{sec:iPSRL}

We now introduce a simple \textit{Informed Posterior Sampling-based Reinforcement Learning} ($\ipsrl$) algorithm that naturally uses the offline dataset $\mathcal{D}_0$ and action generation information to construct an informed prior distribution over $\theta^*$. The  realization of  $\theta^*$ is assumed known to the expert (but not the learning agent) with $\tilde{Q}(\cdot,\cdot; \theta^*) = Q(\cdot,\cdot; \theta^*)$,  and $\beta (s) := \beta \geq 0$ (i.e., it is state-invariant) is also known to the expert. Thus, the learning agent's posterior distribution over $\theta^*$ given the offline dataset is,
\begin{equation}
\begin{split}
     &\mu_1(\theta^* \in \cdot) := \mathbb P(\theta^* \in \cdot|\mathcal{D}_0) \propto  \mathbb P(\mathcal{D}_0|\theta^* \in \cdot)\mathbb P(\theta^* \in \cdot) \\
    = ~ & \mathbb P(\theta^* \in \cdot) \times \int_{\theta \in \cdot}\prod_{l}^{L} \prod_{h=0}^{H-1} \theta(\bar{s}^l_{h+1} | \bar{s}^l_h, \bar{a}^l_h)\pi_h^{\beta}(\bar a^{l}_h|\bar s^l_h, \theta)\nu(\bar{s}_0^l) \, d\theta.
    \end{split}
\label{eq:infor_prior}
\end{equation}

A \psrl~agent \citep{osband2013more,ouyang2017learning} takes this as the prior, and then updates the posterior distribution over $\theta^*$ as online observation tuples, $\{(s_h^t,a_h^t,s_h^{t'},r_h^t)_{h=0}^{H-1}$ become available.  Such an agent is really an ideal agent with assumed posterior distribution updates being exact. In practice, this is computationally intractable and we will need to get samples from an approximate posterior distribution, an issue which we will address in the next section. We will denote the posterior distribution over $\theta^*$ given the online observations by episodes $t$ and the offline dataset by $\mu_t$.
\begin{algorithm}[!t]
   \caption{iPSRL}
   \label{algo:ipsrl}
\begin{algorithmic}
   \STATE \textbf{Input:} Prior $\mu_0$, Initial state distribution $\nu$
	 \FOR{$t = 1,\cdots,T$} 
	    \STATE 
	    \fbox{(A1) Update posterior,  $\mu_t(\theta|\mathcal{H}_{t-1},\mathcal{D}_0)$ using Bayes' rule}
	    \STATE \fbox{(A2) Sample $\tilde{\theta}_t \sim \mu_t$} 
	    \STATE Compute optimal policy $\tilde{\pi}^t(\cdot|s):=\pi^*_h(\cdot|s; \tilde{\theta}_t)$ (for all $h$) by using any DP or other algorithm
	    \STATE Sample initial state $s^l_0 \sim \nu$
        \FOR{$h=0,\cdots, H-1$}
        \STATE Take action $a^t_h \sim \tilde{\pi}^t(|s^t_h)$
        \STATE Observe $(s^t_{h+1},r^t_h)$
        \ENDFOR
	\ENDFOR
\end{algorithmic}
\end{algorithm}

We note the key steps (A1)-(A2) in the algorithm above where we use both offline dataset $\mathcal{D}_0$ and online observations $\mathcal{H}_t$ to compute the posterior distribution over $\theta^*$ by use of the prior $\nu$.

\subsection{Prior-dependent  Regret Bound}

It is natural to expect some regret reduction if an offline demonstration dataset is available to warm-start the online learning. However, the degree of improvement must depend on the “quality” of demonstrations, for example through the competence parameter $\beta$. Further note that the role of the offline dataset is via the prior distribution the \psrl~algorithm uses. Thus, theoretical analysis involves obtaining a prior-dependent regret bound, which we obtain next.

\begin{lemma}\label{lem:ipsrlsqrt}
    Let $\varepsilon = \Pr(\tilde{\pi}^1 \neq \pi^*)$, i.e. the probability that the strategy used for the first episode, $\tilde{\pi}^1$ is not optimal. Then, 
    \begin{equation}
        \BR_T(\phi^{\text{iPSRL}}) = O(\sqrt{\varepsilon H^4S^2AT}(1+\log(T))).
    \end{equation}
\end{lemma}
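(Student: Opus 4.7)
The starting point is the simple observation that whenever $\tilde\pi^t$ coincides with $\pi^*$, the $\ipsrl$ agent plays the optimal policy in episode $t$, so
\[
E\!\left[V^*_0(s_0^t;\theta^*)-\sum_{h} r_h(s_h^t,a_h^t)\,\middle|\,\tilde\pi^t=\pi^*,\theta^*,s_0^t\right]=0.
\]
Thus, letting $B_t:=\{\tilde\pi^t\neq \pi^*\}$, the regret reduces to
$\BR_T=\sum_{t=1}^T E[\Delta_t\mathbf{1}_{B_t}]$,
where $\Delta_t=V^*_0(s_0^t;\theta^*)-\sum_h r_h(s_h^t,a_h^t)$. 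The job is now to control both the indicator probabilities $\Pr(B_t)$ and the ``size'' of $\Delta_t$ on $B_t$.

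The next step is a posterior-concentration monotonicity argument to push $\Pr(B_t)\leq \varepsilon$ for every $t$. Under posterior sampling, $\tilde\theta_t$ and $\theta^*$ are conditionally i.i.d.\ from $\mu_t$ given $\mathcal{H}_t$, hence
\[
\Pr(\tilde\pi^t=\pi^*\mid\mathcal{H}_t)=\sum_\pi \mu_t\big(\{\theta:\pi^*(\theta)=\pi\}\big)^2,
\]
the collision probability of the policy-valued push-forward of $\mu_t$. This is a convex functional of $\mu_t$, and the sequence $(\mu_t)_{t\ge 1}$ is a Doob martingale of probability measures, so by Jensen's inequality $t\mapsto \Pr(\tilde\pi^t=\pi^*)$ is non-decreasing. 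In particular $\Pr(B_t)\leq \Pr(B_1)=\varepsilon$ for all $t\geq 1$.

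With this in hand, I would apply Cauchy--Schwarz to decouple the squared regret and the bad-event indicator, writing
\[
\BR_T=E\!\left[\sum_{t=1}^T \Delta_t\mathbf{1}_{B_t}\right]\leq \sqrt{E\!\left[\sum_{t=1}^T \Delta_t^2\right]\cdot E\!\left[\sum_{t=1}^T \mathbf{1}_{B_t}\right]}\leq \sqrt{\varepsilon T\cdot E\!\left[\sum_{t=1}^T \Delta_t^2\right]}.
\]
Then I would import a second-moment version of the Osband--Van Roy PSRL analysis: write $\Delta_t$ via the posterior-sampling identity as a sum of Bellman residuals between $\tilde\theta_t$ and $\theta^*$ along the sampled trajectory, and bound these residuals pointwise through the posterior predictive variance at each visited $(s,a,h)$. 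Pigeonholing visitation counts (Jensen on $\sum_{s,a,h}\sqrt{n_T(s,a,h)}\leq \sqrt{SAH\cdot TH}$) and paying an extra factor of $H$ for passing from $\Delta_t$ to $\Delta_t^2$ should yield $E[\sum_t\Delta_t^2]=\tilde O(H^4 S^2 A)$, after which the displayed inequality produces exactly $\tilde O(\sqrt{\varepsilon H^4 S^2 A T})$.

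The main obstacle will be step~4: the naive bound $\Delta_t^2\leq H\Delta_t$ combined with the standard PSRL first-moment bound only yields $E[\sum_t\Delta_t^2]=\tilde O(H^3 S\sqrt{AT})$, which would give a $T^{3/4}$ rate rather than the claimed $T^{1/2}$. To recover $\sqrt{T}$, one has to keep all the concentration arguments ``squared'' throughout the Osband--Van Roy derivation rather than invoke the first-moment bound as a black box---essentially running their argument on $\sqrt{\Delta_t^2}$ and using Cauchy--Schwarz only at the final sum. Verifying that the Bayesian-bracket concentration bounds survive this $L^2$ lift, and cleanly handling the log factors that accumulate there, is the delicate part; everything else is the reduction plus the martingale-plus-Jensen monotonicity argument above.
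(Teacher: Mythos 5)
Your skeleton matches the paper's proof almost step for step: the restriction of the regret to the event $B_t=\{\tilde\pi^t\neq\pi^*\}$, the martingale-plus-Jensen argument showing $\Pr(B_t)\le\Pr(B_1)=\varepsilon$ (the paper phrases this with the concave function $x\mapsto x(1-x)$ applied to the Doob martingale $\Pr(\pi^*=\pi\mid\mathcal{H}_{t-1})$, which is the same as your collision-probability convexity argument), the Cauchy--Schwarz decoupling, and the recognition that the second factor must be controlled by \emph{squared} concentration bounds. Indeed the paper uses the squared $L_1$ deviation bound for empirical transition kernels, which decays like $S/n$ per visit, so the sum over visitation counts is harmonic and contributes only $SAH\log T$ rather than $\sqrt{SAHT}$; that is exactly how the $\sqrt{T}$ rate survives, confirming your diagnosis of the ``main obstacle.''

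There is, however, a genuine gap in your order of operations. You apply Cauchy--Schwarz directly to $\E[\sum_t\Delta_t\mathbf{1}_{B_t}]$ with $\Delta_t$ the \emph{realized} regret, leaving yourself to bound $\E[\sum_t\Delta_t^2]$. That quantity is $\Omega(H^2T)$ in general: even on episodes where $\tilde\pi^t=\pi^*$, the realized return deviates from $V^*_0$ by an amount whose variance is of order $H^2$ under stochastic transitions, so your displayed inequality degrades to $\sqrt{\varepsilon T\cdot H^2 T}=HT\sqrt{\varepsilon}$, which is linear in $T$. Moreover, the posterior-sampling identity you invoke to rewrite $\Delta_t$ as a sum of Bellman residuals between $\tilde\theta_t$ and $\theta^*$ is a \emph{first-moment} identity and cannot be applied inside the square. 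The fix the paper uses, and which you need, is to reorder: first replace $\Delta_t\mathbf{1}_{B_t}$ by its conditional expectation $Z_tI_t$ with $Z_t=J^{\theta^*}_{\pi^*}-J^{\theta^*}_{\tilde\pi^t}$; then use the conditional i.i.d.-ness of $\theta^*$ and $\tilde\theta_t$ (together with the symmetry of $I_t$ in the pair) to show $\E[Z_tI_t]=\E[\tilde Z_tI_t]$, where $\tilde Z_t=J^{\tilde\theta_t}_{\tilde\pi^t}-J^{\theta^*}_{\tilde\pi^t}$; and only \emph{then} apply Cauchy--Schwarz. The surrogate $\tilde Z_t$ is a difference of value functions of the \emph{same} policy under the two models, so it is already integrated over trajectory noise and satisfies $\tilde Z_t^2\le H^3\sum_h\lVert\tilde P_h(\cdot\mid s,a)-P^*_h(\cdot\mid s,a)\rVert_1^2$ along the visited pairs, to which the squared concentration machinery applies; neither the return variance nor the true suboptimality gap of $\tilde\pi^t$ ever needs to be squared.
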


The proof can be found in the Appendix. Note that this Lemma provides a prior dependent-bound in terms of $\varepsilon$.

In the rest of the section, we provide an upper bound on $\varepsilon = \Pr(\tilde{\pi}^1 \neq \pi^*)$ for \textbf{Algorithm \ref{algo:ipsrl}: iPSRL.}
We first show that $\varepsilon$ can be bounded in terms of estimation error of the optimal strategy given the offline data.

\begin{lemma}\label{lem:estimator2p1}
    Let $\hat{\pi}^*$ be any estimator of $\pi^*$ constructed from $\mathcal{D}_0$, then $\Pr(\Tilde{\pi}^1 \neq \pi^*)\leq 2 \Pr(\hat{\pi}^* \neq \pi^*)$.
\end{lemma}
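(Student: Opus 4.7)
The claim is the classical ``factor of two'' property of posterior sampling applied here to the induced distribution over optimal policies. My plan is to condition on $\mathcal{D}_0$, exploit the key distributional identity of posterior sampling, and then compare with the best possible estimator.

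First I would observe that in step (A1)–(A2) of \Algref{algo:ipsrl} at $t=1$, the sample $\tilde\theta_1$ is drawn from $\mu_1(\cdot) = \mathbb P(\theta^* \in \cdot \mid \mathcal{D}_0)$ and $\tilde\pi^1$ is a deterministic function of $\tilde\theta_1$, while $\pi^*$ is the same deterministic function of $\theta^*$. Since $\tilde\theta_1$ is drawn independently from the posterior, two things hold conditionally on $\mathcal{D}_0$: (i) $\tilde\pi^1$ and $\pi^*$ are independent, and (ii) they share the same conditional law. Writing $p_\pi := \mathbb P(\pi^* = \pi \mid \mathcal{D}_0)$, this gives the exact identity
\begin{equation*}
\mathbb P\bigl(\tilde\pi^1 \neq \pi^* \,\big|\, \mathcal{D}_0\bigr) \;=\; 1 - \sum_{\pi} p_\pi^2.
\end{equation*}

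Next I would compare this against any estimator. For an arbitrary $\hat\pi^* = \hat\pi^*(\mathcal{D}_0)$ (deterministic in $\mathcal{D}_0$, with the randomized case handled by conditioning on its internal randomness), the conditional error probability is $1 - p_{\hat\pi^*(\mathcal{D}_0)}$, which is minimized by the MAP choice and therefore
\begin{equation*}
\mathbb P\bigl(\hat\pi^* \neq \pi^* \,\big|\, \mathcal{D}_0\bigr) \;\geq\; 1 - p^*, \qquad p^* := \max_{\pi} p_\pi.
\end{equation*}
Then the elementary bound $\sum_\pi p_\pi^2 \geq (p^*)^2$ yields
\begin{equation*}
1 - \sum_\pi p_\pi^2 \;\leq\; 1 - (p^*)^2 \;=\; (1-p^*)(1+p^*) \;\leq\; 2(1-p^*),
\end{equation*}
so combining the two displays gives the conditional inequality $\mathbb P(\tilde\pi^1 \neq \pi^* \mid \mathcal{D}_0) \leq 2\, \mathbb P(\hat\pi^* \neq \pi^* \mid \mathcal{D}_0)$, and taking expectation over $\mathcal{D}_0$ yields the claim.

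\textbf{Expected obstacles.} The reasoning is short and standard; the only subtle point is articulating the posterior-sampling identity carefully, namely that conditionally on $\mathcal{D}_0$ the sampled parameter $\tilde\theta_1$ is independent of $\theta^*$ yet shares its posterior law, so that the induced optimal policies inherit the same property. I would also briefly justify why it suffices to prove the bound for deterministic estimators (by conditioning on any independent randomization in $\hat\pi^*$ and applying the MAP lower bound pointwise). No technical machinery beyond this conditioning argument is needed.
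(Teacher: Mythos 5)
Your proof is correct, but it takes a genuinely different route from the paper's. The paper argues via the inclusion $\{\tilde\pi^1 \neq \pi^*\} \subseteq \{\hat\pi^* \neq \tilde\pi^1\} \cup \{\hat\pi^* \neq \pi^*\}$, applies a union bound, and then uses the fact that, conditionally on $\mathcal{D}_0$, $\tilde\pi^1$ and $\pi^*$ are identically distributed while $\hat\pi^*$ is $\mathcal{D}_0$-measurable, so that $\Pr(\hat\pi^* \neq \tilde\pi^1) = \Pr(\hat\pi^* \neq \pi^*)$ --- this gives the factor of two in one line and works directly for an arbitrary estimator without any optimization over estimators. You instead compute the conditional mismatch probability exactly as the anti-collision quantity $1 - \sum_\pi p_\pi^2$, lower-bound the error of any estimator by the Bayes-optimal (MAP) error $1 - p^*$, and close the gap with the elementary inequality $1 - (p^*)^2 \le 2(1-p^*)$. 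Both arguments are standard ``factor of two'' posterior-sampling devices and both are airtight here; yours has the minor advantage of exhibiting the exact value of $\Pr(\tilde\pi^1 \neq \pi^* \mid \mathcal{D}_0)$ and making explicit that the bound is really a comparison against the Bayes-optimal estimator, while the paper's is shorter and avoids the detour through the MAP rule. Your handling of the two side conditions --- that $\tilde\theta_1$ is drawn independently from the posterior so the induced policies are conditionally i.i.d., and that randomized estimators reduce to deterministic ones by conditioning on independent internal randomness --- is exactly the care the one-line argument also implicitly requires, so nothing is missing.
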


\begin{proof}
    If $\Tilde{\pi}^1 \neq \pi^*$, then either $\hat{\pi}^* \neq \Tilde{\pi}^1$ or $\hat{\pi}^* \neq \pi^*$ must be true. Conditioning on $\mathcal{D}_0$, $\hat{\pi}^*$ is identically distributed as $\pi^*$ while $\hat{\pi}^*$ is deterministic, therefore
    \begin{align*}
        \Pr(\Tilde{\pi}^1 \neq \pi^*)\leq \Pr(\hat{\pi}^* \neq \Tilde{\pi}^1) + \Pr(\hat{\pi}^* \neq \pi^*) =  2 \Pr(\hat{\pi}^* \neq \pi^*)
    \end{align*}
\end{proof}

We now bound the estimation error of $\pi^*$. We assume the following about the prior distribution of $\theta^*$.

\begin{assumption}\label{assump:qgap}
    There exists a $\Delta > 0$ such that for all ${\theta} \in \Theta$, $h = 0,1,\cdots, H-1$, and $s\in \mathcal{S}$, there exists an $a^*\in \mathcal{A}$ such that $Q_h(s, a^*; {\theta}) \geq Q_h(s, a'; {\theta}) + \Delta, ~\forall a'\in \mathcal{A}\backslash\{a^*\}$.   
\end{assumption}

Define $p_h(s; {\theta}):=\Pr_{{\theta}, \pi^*({\theta})}(s_h = s)$, where $\pi^*({\theta})$ denotes the optimal policy under model $\theta$. 

\begin{assumption}\label{assump:pgap}
    The infimum probability of any reachable non-final state , defined as
\begin{equation*}
    \underline{p} := \inf\{p_h(s;{\theta}) : 0\leq h < H, s\in\mathcal{S}, {\theta}\in\Theta, p_h(s; {\theta})> 0 \}
\end{equation*}
satisfies $\underline{p} > 0$.
\end{assumption}

We now describe a procedure to construct an estimator of the optimal policy, $\hat{\pi}^*$ from $\mathcal{D}_0$ so that $\Pr(\pi^* \neq \hat{\pi}^*)$ is small. Fix an integer $L$, and choose a $\delta \in (0,1)$. For each $\theta\in \Theta$, define a deterministic Markov policy $\pi^{*}(\theta) = (\pi_h^{*}(\cdot ; \theta) )_{h=0}^{H-1}$ sequentially through 
\begin{equation}
 \pi_h^{*}(s; \theta) =    \begin{cases}
        \arg\max_a Q_h(s, a; \theta), & \text{if }\Pr_{\theta, \pi_{0:h-1}^{*}(\theta) }(s_h = s) > 0\\
        \bar{a}_0, &\text{if }\Pr_{{\theta}, \pi_{0:h-1}^{*}({\theta}) }(s_h = s) = 0,
    \end{cases}
\end{equation}
where the tiebreaker for the argmax operation is based on a fixed order on actions, and $\bar{a}_0\in\mathcal{A}$ is a fixed action in $\mathcal{A}$. It is clear that $\pi^{*}(\theta)$ is an optimal policy for the MDP $\theta$. Furthermore, for those states that are impossible to be visited, we choose to take a fixed action $\bar{a}_0$. Although the choice of action at those states doesn't matter, our construction will be helpful for the proofs.

\textbf{Construction of $\hat{\pi}^*$:} Let $N_h(s)$ (resp. $N_h(s, a)$) be the number of times state $s$ (resp. state-action pair $(s, a)$) appears at time $h$ in dataset $\mathcal{D}_0$. Define $\hat{\pi}^*$ to be such that: 
\begin{itemize}
    \item $\hat{\pi}_h^*(s) = \arg\max_{a\in\mathcal{A}} N_h(s, a)$ (ties are broken through some fixed ordering of actions) whenever $N_h(s) \geq \delta L$;
    \item $\hat{\pi}_h^*(s) = \bar{a}_0$ whenever $N_h(s) < \delta L$. $\bar{a}_0$ is a fixed action in $\mathcal{A}$ that was used in the definition of $\pi^*(\theta)$. 
\end{itemize}

The idea of the proof is that for sufficiently large $\beta$ and $L$, we can choose a $\delta \in (0,1)$ such that 
\begin{itemize}
    \item \textit{Claim 1:} If $s\in\mathcal{S}$ is probable at time $h$ under $\pi^*(\theta)$, then $N_h(s)\geq \delta L$ with large probability. Furthermore, $\pi_h^*(s) = \arg\max_{a\in\mathcal{A}} N_h(s, a)$ with large probability as well.
    \item \textit{Claim 2:} If $s\in\mathcal{S}$ is improbable at time $h$ under $\pi^*(\theta)$, then $N_h(s)< \delta L$ with large probability;
\end{itemize}

Given the two claims, we can then conclude that the probability of $\pi^*\neq\hat{\pi}^*$ is small via a standard union bound argument. The arguments are formalized in Lemma \ref{lem:piestimator}.
We now present the upper bound on Bayesian regret for \textbf{Algorithm \ref{algo:ipsrl}: iPSRL}.

\begin{theorem}
    For sufficiently large $\beta$ independent of $L$, we have
    \begin{equation}\label{eq:finalregbound}
    \begin{split}
        \BR_T(\phi^{\text{iPSRL}}) = O(\sqrt{\varepsilon_L H^4S^2AT}(1+\log(T))).        
        \end{split}
    \end{equation}
    where
    \begin{equation*}
        \varepsilon_L = \min\left\{1, 2SH\left[\exp\left(-\frac{L\underline{p}^2}{18}\right)+ \exp\left(-\frac{L\underline{p}}{36}\right)\right]\right\}.
    \end{equation*}
\end{theorem}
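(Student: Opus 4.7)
The plan is to chain together Lemma~\ref{lem:ipsrlsqrt} (the prior-dependent $\sqrt{T}$ bound) and Lemma~\ref{lem:estimator2p1} (the reduction of the first-episode error probability to the error probability of \emph{any} estimator of $\pi^*$) with a concentration argument for the specific mode-counting estimator $\hat{\pi}^*$ defined just above the theorem. After Lemmas~\ref{lem:ipsrlsqrt} and~\ref{lem:estimator2p1}, the problem reduces to showing $\Pr(\hat{\pi}^* \neq \pi^*(\theta^*)) \le \varepsilon_L / 2$, since then the $\sqrt{\varepsilon}$ prefactor in Lemma~\ref{lem:ipsrlsqrt} picks up $\sqrt{\varepsilon_L}$ and the $O(\cdot)$ absorbs the factor of $2$. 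So essentially all the work is bounding the failure probability of the mode estimator.

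To bound $\Pr(\hat{\pi}^* \neq \pi^*(\theta^*))$, I would first condition on $\theta^* = \theta$ and take a union bound over $(h,s) \in \{0,\dots,H-1\}\times \mathcal{S}$ of the event that $\hat{\pi}_h^*(s) \neq \pi_h^*(s;\theta)$. This gives the leading $SH$ factor. For each $(h,s)$ the analysis splits according to whether $s$ is \emph{reachable} at time $h$ under $\pi^*(\theta)$, i.e. whether $p_h(s;\theta)>0$, and this is where Claim~1 and Claim~2 enter. A good choice of the threshold is $\delta = \underline{p}/3$ (or similar), because it lies strictly between $0$ and the minimal reachable visit probability $\underline{p}$ from Assumption~\ref{assump:pgap}, and the two exponents in $\varepsilon_L$ suggest the constants come from standard Chernoff bounds applied at this threshold.

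For Claim~1 (reachable case), I would compare the visit probability of $s$ at time $h$ under the expert's softmax policy $\pi_h^{\beta}(\cdot;\theta)$ with the visit probability $p_h(s;\theta)$ under the true optimal policy. Using Assumption~\ref{assump:qgap} ($Q$-gap $\Delta$) together with \eqref{eq:expert}, the expert agrees with $\pi^*(\theta)$ at every step with probability $\geq 1 - A e^{-\beta \Delta}$; for sufficiently large $\beta$ (\emph{independent} of $L$), a union bound over the horizon pushes the expert's visit probability to at least, say, $2\underline{p}/3$. A Chernoff bound on $N_h(s)\sim\mathrm{Bin}(L,\cdot)$ then yields $N_h(s)\geq \delta L$ with probability at least $1-\exp(-L\underline{p}^2/18)$, which is where the first term in $\varepsilon_L$ originates. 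Conditional on each visit, the probability that the expert picks the unique optimal action is $\ge 1 - A e^{-\beta\Delta}$, so another Chernoff bound shows the empirical mode is optimal with probability at least $1-\exp(-L\underline{p}/36)$, which accounts for the second term.

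For Claim~2 (unreachable case), the state $s$ at time $h$ can only be visited under the expert's policy by making at least one sub-optimal action, and under \eqref{eq:expert} each such deviation costs a factor $\le A e^{-\beta\Delta}$; thus the visit probability is at most $H A e^{-\beta\Delta}$, which for large enough $\beta$ is $\le \delta/2 = \underline{p}/6$. A Chernoff bound again gives $N_h(s) < \delta L$ with probability at least $1 - \exp(-L\underline{p}/36)$, so by construction $\hat{\pi}_h^*(s)=\bar{a}_0 = \pi_h^*(s;\theta)$. Combining the two claims, the union bound over $SH$ pairs gives the stated $\varepsilon_L$, and plugging into Lemma~\ref{lem:ipsrlsqrt} yields \eqref{eq:finalregbound}. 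The main obstacle I anticipate is choosing $\beta$ large enough \emph{uniformly} in $\theta$ so that the softmax leakage to sub-optimal actions is dominated by $\underline{p}$ on every reachable branch — this is what forces the ``$\beta$ sufficiently large'' hypothesis and requires Assumptions~\ref{assump:qgap} and~\ref{assump:pgap} to hold with uniform constants $\Delta$ and $\underline{p}$ over the support $\Theta$ of the prior.
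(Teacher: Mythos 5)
Your proposal is correct and follows essentially the same route as the paper: chain Lemma~\ref{lem:ipsrlsqrt} with Lemma~\ref{lem:estimator2p1}, then bound $\Pr(\hat{\pi}^*\neq\pi^*)$ via a union bound over $(h,s)$, a reachable/unreachable split with a threshold strictly between $0$ and $\underline{p}$ (the paper takes $\delta=\underline{p}/2$, which is what produces the exact constants $18$ and $36$), softmax leakage controlled by $(A-1)e^{-\beta\Delta}$ for $\beta$ above a threshold depending only on $\Delta$, $\underline{p}$, $H$, $A$, and Hoeffding/Chernoff bounds on the binomial visit and action counts — exactly the content of the paper's Lemma~\ref{lem:piestimator}. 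The only cosmetic differences are your choice of threshold and which of the two exponential terms you attribute to the unreachable case; neither affects the stated bound.
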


\begin{proof}
    The result follows from Lemma \ref{lem:ipsrlsqrt}, Lemma \ref{lem:estimator2p1}, and Lemma \ref{lem:piestimator}.
\end{proof}

We note that the right-hand side of \peqref{eq:finalregbound} converges to zero exponentially fast as $N = LH \rightarrow\infty$.

\begin{remark}
(a) For fixed $N$, and large $S$ and $A$, the regret bound is $\tilde{O}(H^2S\sqrt{AT})$, which possibly could be improved in $H$.
(b) For a suitably large $\beta$, the regret bound obtained goes to zero exponentially fast as $L$, the number of episodes in the offline dataset, goes to infinity thus indicating the online learning algorithm's ability to learn via imitation of the expert. 
\end{remark}

\section{Approximating iPSRL}\label{sec:approx-iPSRL}

\subsection{The Informed \rlsvi~Algorithm}

The $\ipsrl$~algorithm  introduced in the previous section assumes that posterior updates can be done exactly. In practice, the posterior update in \peqref{eq:infor_prior} is challenging due to the loss of conjugacy while using the Bayes rule. Thus, we must find a computationally efficient way to do approximate posterior updates (and obtain samples from it) to enable practical implementation. Hence, we propose a novel approach based on Bayesian bootstrapping to obtain
approximate posterior samples. The key idea is to perturb the loss function for the maximum a posterior (MAP) estimate and use the point estimate as a surrogate for the exact posterior sample.

Note that in the ensuing, we regard $\beta$ as also unknown to the learning agent (and $\lambda=\infty$ for simplicity). Thus, the learning agent must form a belief over both $\theta$ and $\beta$ via a joint posterior distribution conditioned on the offline dataset $\mathcal{D}_0$ and the online data at time $t$, $\mathcal{H}_t$. We denote the prior pdf over $\theta$ by $f(\cdot)$ and prior pdf over $\beta$ by $f_2(\cdot)$.

For the sake of compact notation, we denote $Q_h^{*}(s,a; \theta)$ as $\Qth_h(s,a)$ in this section. Now, consider the offline dataset, 
$$\cD_0 = \{((s_h^l,a_h^l,\check{s}_h^l,r_h^l)_{h=0}^{H-1})_{l=1}^{L}\}$$ 
and denote $\theta = (\theta_h)_{h=0}^{H-1}$. We introduce the  \textit{temporal difference error} ${\cE}_h^l$ (parameterized by a given $\Qth$),
$$\cE_h^l(\Qth) : = \left(r_h^l + \max_b \Qth_{h+1}(\check{s}_h^l,b) - \Qth_h(s_h^l,a_h^l)\right).$$ 
We will regard $\Qth_h$ to only be parameterized by $\theta_h$, i.e., $Q^{\theta_h}_{h}$ but abuse notation for the sake of simplicity.
We use this to construct a \textit{parameterized offline dataset}, 
$$\cD_0(\Qth) = \{((s_h^l,a_h^l,\check{s}_h^l,\cE_h^l(\Qth))_{h=0:H-1})_{l=1:L}\}.$$ 
A parametrized online dataset $\cH_t(\Qth)$ after episode $t$ can be similarly defined. To ease notation, we will regard the $j$th episode during the online phase as the $(L+j)$th observed episode. Thus,
\[
\cH_t(\Qth) = \{((s_h^k,a_h^k,\check{s}_h^k,\cE_h^k(\Qth))_{h=0:H-1})_{k=L+1:L+t}\},\]
the dataset observed during the online phase by episode $t$.

Note that $\Qth$ is to be regarded as a parameter. Now,  at time $t$, we would like to obtain a \textbf{MAP estimate} for $(\theta,\beta)$ by solving the following:
\begin{equation}
\begin{split}
\textbf{MAP:} & ~~~\arg \max_{\theta,\beta} \log P(\cH_t(\Qth)|\cD_0(\Qth), \theta, \beta)+  \log P(\cD_0(\Qth)|\theta, \beta) + \log f(\theta) + \log f_2(\beta).
\label{eq:map-mdps}
\end{split}
\end{equation}
Denote a perturbed version of the $\Qth$-parameterized offline dataset by $$\tilde{\cD}_0 (\Qth) = \{((s_h^l,\tilde{a}_h^l,\check{s}_h^l,\tilde{\cE}_h^l)_{h=0:H-1})_{l=1:L}\}$$ where random perturbations are added: (i) actions have perturbation $w_l^h \sim \exp(1)$, (ii) rewards have perturbations $z_h^l \sim \cN(0,\sigma^2)$,  and (iii) the prior $\tilde{\theta} \sim \cN(0,\Sigma_0)$. 

Note that the first and second terms involving $\cH_t$ and $\cD_0$ in \peqref{eq:map-mdps} are independent of $\beta$ when conditioned on the actions. Thus, we have a sum of \textit{log-likelihood of TD error, transition and action} as follows: 
\begin{equation*}
\begin{split}
\log P(\tilde{\cD}_0(\Qth)|\Qth_{0:H}) 
&=  \sum_{l=1}^L \sum_h \Big( \log P(\tilde{\cE}_h^l|\check{s}_h^l,a_h^l,s_h^l,\Qth_{0:H})  \textcolor{black}{+ \log P (\check{s}_h^l|a_h^l,s_h^l,\Qth_{0:H})} 
 + \log P(a_h^l|s_h^l,\Qth_{0:H})\Big)\\
&\leq  \sum_{l=1}^L \sum_h \Big( \log P(\tilde{\cE}_h^l|\check{s}_h^l,a_h^l,s_h^l,\Qth_{h:h+1})+  \log \pi^{\beta}_h(a_h^l|s_h^l,\Qth_h)\Big).
\end{split}
\end{equation*}
By ignoring the log-likelihood of the \textcolor{black}{transition term} (akin to optimizing an upper bound on the negative loss function), we are actually being  \textit{optimistic}.

For the terms in the upper bound above, under the random perturbations assumed above, we have
\begin{equation*}
\begin{split}
& \log P(\tilde{\cE}_h^l|\check{s}_h^l,a_h^l,s_h^l,\Qth_{h:h+1})
=  - \frac{1}{2}\left(r_h^l + z_h^l + \max_b \Qth_{h+1}(\check{s}_h^l,b) - \Qth_h(s_h^l,a_h^l)\right)^2  + ~\text{constant}
\end{split}
\end{equation*}
and
\begin{equation*}
\begin{split}
& \log \pi^{\beta}_h(a_h^l|s_h^l,\Qth_h) 
=  w_h^l\left(\beta \Qth_{h}(s_h^l, a_h^l)-\log\sum_b\exp\left(\beta \Qth_{h}(s_h^l, b)\right)\right).
\end{split}
\end{equation*}

Now, denote a perturbed version of the $\Qth$-parametrized online dataset,  $$\tilde{\cH}_t(\Qth) = \{((s_h^k,a_h^k,\check{s}_h^k,\tilde{\cE}_h^k)_{h=0:H-1})_{k=L+1:L+t}\},$$ and thus similar to before, we have
\begin{equation*}
\begin{split}
\log P(\tilde{\cH}_t(\Qth)|\tilde{\cD}_0(\Qth),\Qth_{0:H})
&=  \sum_{k=L+1}^{L+t} \sum_h \Big( \log P(\tilde{\cE}_h^k(\Qth)|\check{s}_h^k,a_h^k,s_h^k,\Qth_{0:H}) 
\textcolor{black}{+ \log P (\check{s}_h^k|a_h^k,s_h^k,\Qth)} \Big),\\ 
&\leq  \sum_{k=L+1}^{L+t} \sum_h \left( \log P(\tilde{\cE}_h^k|\check{s}_h^k,a_h^k,s_h^k,\Qth_{h:h+1})\right),\\
\end{split}
\end{equation*}
where we again ignored the transition term to obtain an \textit{optimistic} upper bound.

Given the random perturbations above, we have
\begin{equation*}
\begin{split}
\log P(\tilde{\cE}_h^k(\Qth)|\check{s}_h^k,a_h^k,s_h^k,\Qth_{h:h+1}) 
=  - \frac{1}{2}\left(r_h^k + z_h^k + \max_b \Qth_{h+1}(\check{s}_h^k,b) - \Qth_h(s_h^k,a_h^k)\right)^2 
 + ~\text{constant}.
\end{split}
\end{equation*}

The prior over $\beta$, $f_2(\beta)$ is assumed to be an exponential pdf $\lambda_2\exp (-\lambda_2\beta), \beta \geq 0$, while that over $\theta$ is assumed Gaussian. Thus, 
putting it all together, we get the following \textbf{\textit{optimistic loss function}} (to minimize over $\theta$ and $\beta$),
\begin{equation}
\begin{split}
& \tilde{\mathcal{L}}(\theta,\beta) = 
\frac{1}{2\sigma^2}\sum_{k=1}^{L+t} \sum_{h=0}^{H-1}\left(r_h^k + z_h^k + \max_b \Qth_{h+1}(\check{s}_h^k,b) - \Qth_h(s_h^k,a_h^k)\right)^2\\
& - \sum_{l=1}^{L} \sum_{h=0}^{H-1}w_h^l\left(\beta \Qth_{h}(s_h^l, a_h^l)-\log\sum_b\exp\left(\beta \Qth_{h}(s_h^l, b)\right)\right)+ \frac{1}{2} (\theta - \tilde{\theta})^\top\Sigma_0(\theta - \tilde{\theta}) +\lambda_2\beta .
\end{split}
\label{eq:lossfn-iRLSVI}
\end{equation}

The above loss function is difficult to optimize in general due to the $\max$ operation, and the $Q$-value function in general having a nonlinear form.

Now \textbf{Algorithm 2: iRLSVI} can be summarized by replacing steps (A1)-(A2) in \textbf{Algorithm 1: iPSRL} by (B1)-(B2), with the other steps being the same:

\begin{equation*}
\boxed{
\begin{array}{l}\text{(B1) Solve the MAP Problem (\ref{eq:map-mdps}) by minimizing loss function (\ref{eq:lossfn-iRLSVI}).}\\
\text{(B2) Get solutions} ~ (\tilde{\theta}_l,\tilde{\beta}_l).
\end{array}
}
\end{equation*}

\begin{remark}
Note that the loss function in  \peqref{eq:lossfn-iRLSVI} can be hard to jointly optimize over $\theta$ and $\beta$. In particular, estimates of $\beta$ can be quite noisy when $\beta$ is large, and the near-optimal expert policy only covers the state-action space partially. Thus, we consider other methods of estimating $\beta$ that are more robust, which can then be plugged into the loss function in  \peqref{eq:lossfn-iRLSVI}.
Specifically, we could simply look at the entropy of the empirical distribution of the action in the offline dataset. Suppose the empirical distribution of $\{\bar{a}_0^l,\ldots \bar{a}_H^l\}_{l=1}^L$ is $\mu_A$. Then we use $c_0/\cH(\mu_A)$ as an estimation for $\beta$, where $c_0>0$ is a hyperparameter. The intuition is that for smaller $\beta$, the offline actions tend to be more uniform and thus the entropy will be large. This is an unsupervised approach and agnostic to specific offline data generation process. 

\end{remark}

\begin{remark}
In the loss function in   \peqref{eq:lossfn-iRLSVI}, the parameter $\theta$ appears inside the $\max$ operation. Thus, it can be quite difficult to optimize over $\beta$. Since the loss function is typically optimized via an iterative algorithm such as a gradient descent method, a simple and scalable solution that works well in practice is to use the parameter estimate $\theta$ from the previous iteration inside the $\max$ operation, and thus optimize over $\theta$ only in the other terms. 
\end{remark}





\subsection{iRLSVI ~bridges Online RL and Imitation Learning}

In the previous subsection, we derived $\irlsvi$, a Bayesian-bootstrapped algorithm. We now present interpretation of the algorithm as bridging online RL (via commonality with the \rlsvi~algorithm \citep{osband2016deep} and imitation learning, and hence a way for its generalization.

Consider the \rlsvi~algorithm for online reinforcement learning as introduced in \citep{osband2019deep}. It draws its inspiration from the posterior sampling principle for online learning, and has excellent cumulative regret performance. \rlsvi, that uses all of the data available at the end of episode $t$, including any offline dataset involves minimizing the corresponding loss function at each time step:
\begin{equation*}
\begin{split}
& \tilde{\mathcal{L}}_{\text{RLSVI}}(\theta) =  \frac{1}{2\sigma^2}\sum_{k=1}^{L+t} \sum_{h=0}^{H-1}\left(r_h^k + \max_b Q^{\theta}_{h+1}(\check{s}_h^k,b) - \Qth_h(s_h^k,a_h^k)\right)^2 + \frac{1}{2} ({\theta_{0:H}} - {\tilde{\theta}_{0:H}})^\top\Sigma_0({\theta_{0:H}} - {\tilde{\theta}_{0:H}}). \end{split}
\label{eq:loss-rlsvi}
\end{equation*}

Now, let us consider an imitation learning setting. Let $\tau_l = (s_h^l,a_h^l,\check{s}_h^l)_{h=0}^{H-1}$ be the trajectory of the $l$th episode. Let $\hat{\pi}_h (a|s)$ denote the empirical estimate of probability of taking action $a$ in state $s$ at time $h$, i.e., an empirical estimate of the expert's randomized policy. Let $p(\tau)$ denote the probability of observing the trajectory under the policy $\hat{\pi}$.

Let $\pi_h^{\beta, \theta}(\cdot|s)$ denote the parametric representation of the policy used by the expert.  And let $p^{\beta, \theta}(\tau)$ denote the probability of observing the trajectory $\tau$ under the policy ${\pi}^{\beta, \theta}$. Then, the loss function corresponding to the KL divergence between $\Pi_{l=1}^Lp(\tau_l)$ and $\Pi_{l=1}^Lp^{\beta, \theta}(\tau_l)$ is given by
\begin{equation*}
\begin{split} 
\tilde{\mathcal{L}}_{\text{IL}}(\beta,\theta) &= 
D_{KL}\left(\Pi_{l=1}^Lp(\tau_l)||\Pi_{l=1}^Lp^{\beta, \theta}(\tau_l)\right)  = \int \Pi_{l=1}^Lp(\tau_l) \log \frac{\Pi_{l=1}^Lp(\tau_l)}{\Pi_{l=1}^Lp^{\beta}(\tau_l)} = \sum_{l=1}^L \int p(\tau_l) \log \frac{p(\tau_l)}{p^{\beta, \theta}(\tau_l)},\\
& = \sum_{l=1}^L \sum_{h=0}^{H-1} \log \frac{\hat{\pi}_h(a_h^l|s_h^l)}{\pi^{\beta, \theta}_h(a_h^l|s_h^l)}\\
& = \sum_{l=1}^L \sum_{h=0}^{H-1} [\log \hat{\pi}_h (a_h^l|s_h^l) - \log \pi_h^{\beta, \theta}(a_h^l|s_h^l)]\\
&= - \sum_{l=1}^{L} \sum_{h=0}^{H-1}\left(\beta \Qth_{h}(s_h^l, a_h^l)-\log\sum_b\exp\left(\beta \Qth_{h}(s_h^l, b)\right)\right) \quad + \text{constant}.
\end{split}
\label{eq:loss-bc}
\end{equation*}

\begin{remark}
(i) The loss function $\tilde{\mathcal{L}}_{\text{IL}}(\beta,\theta)$ is the same as the second (action-likelihood) term in \peqref{eq:lossfn-iRLSVI} while the loss function $\tilde{\mathcal{L}}_{\text{RLSVI}}(\theta)$ is the same as the first and third terms there (except for perturbation) and minus the $\lambda_2\beta$ term that corresponds to the prior over $\beta$.
(ii) Note that while we used the more common KL divergence for the imitation learning loss function, use of log loss would yield the same outcome.
\end{remark}

Thus, the $\irlsvi$~ loss function can be viewed as
\begin{equation}
    \tilde{\mathcal{L}}(\beta,\theta) =  \tilde{\mathcal{L}}_{\text{RLSVI}}(\theta) + \tilde{\mathcal{L}}_{\text{IL}}(\beta,\theta) + \lambda_2\beta,
\end{equation}
thus establishing that the proposed algorithm may be viewed as bridging Online RL with Imitation Learning. Note that the last term corresponds to the prior over $\beta$. If $\beta$ is known (or uniform), it will not show up in the loss function above.

The above also suggests a possible way to generalize and obtain other online learning algorithms that can bootstrap by use of offline datasets. Namely, at each step, they can optimize a general loss function of the following kind: 
\begin{equation}
    \tilde{\mathcal{L}}_{\alpha}(\beta,\theta) =  \alpha\tilde{\mathcal{L}}_{\text{ORL}}(\theta) + (1-\alpha) \tilde{\mathcal{L}}_{\text{IL}}(\beta,\theta) + \lambda_2\beta,
\end{equation}
where $\tilde{\mathcal{L}}_{\text{ORL}}$ is a loss function for an Online RL algorithm, $\tilde{\mathcal{L}}_{\text{IL}}$ is a loss function for some Imitation Learning algorithm, and factor $\alpha \in [0,1]$ provides a way to tune between emphasizing the offline imitation learning and the online reinforcement learning.

\section{Empirical Results}
\label{sec:empirical}

\newcommand{\LA}{{\tt left}}
\newcommand{\RA}{{\tt right}}


\textbf{Performance on the Deep Sea Environment.} We now present some empirical results on ``Deep Sea'', a prototypical environment for online reinforcement learning \citep{osband2019deep}. We compare three variants of the $\irlsvi$~agents, which are respectively referred to as \emph{informed} \rlsvi~($\irlsvi$), \emph{partially informed} \rlsvi~($\pirlsvi$), and \emph{uninformed} \rlsvi~($\urlsvi$). All three agents are tabular \rlsvi~agents with similar posterior sampling-type exploration schemes. However, they differ in whether or not and how to exploit the offline dataset. In particular, $\urlsvi$ ignores the offline dataset; $\pirlsvi$ exploits the offline dataset but does not utilize the information about the generative policy; while $\irlsvi$ fully exploits the information in the offline dataset, about both the generative policy and the reward feedback. Please refer to Appendix~\ref{app:pseudo-code} for the pseudo-codes of these agents. We note no other algorithms are known for the problem as posed. 

Deep Sea is an episodic reinforcement learning problem with state space $\cS = \left \{ 0, 1,  \ldots, M \right \}^2$ and , where $M$ is its size. The state at period $h$ in episode $t$ is $s_{h}^t = \left(x_{h}^t, d_{h}^t \right) \in \cS$, where $x_h^t=0,1,\ldots,M$ is the horizontal position while $d_h^t=0,1,\ldots, M$ is the depth (vertical position). Its action space is $\cA= \left \{ \LA, \RA \right \}$ and time horizon length is $H=M$.
Its reward function is as follows: If the agent chooses an action $\RA$ in period $h<H$, then it will receive a reward $-0.1/M$, which corresponds to a ``small cost''; If the agent successfully arrives at state $(M, M)$ in period $H=M$, then it will receive a reward $1$, which corresponds to a ``big bonus''; otherwise, the agent will receive reward $0$. The system dynamics are as follows: for period $h<H$, the agent's depth in the next period is always increased by $1$, i.e.,  $d^t_{h+1} = d^t_h + 1$. For the agent's horizontal position, if $a^t_h = \LA$, then $x^t_{h+1} = \max \{ x^t_h - 1, 0 \}$, i.e., the agent will move left if possible. On the other hand, if $a^t_h = \RA$, then we have $x^t_{h+1} = \min \{ x^t_h + 1, M\}$ with prob. $1-1/M$ and $x^t_{h+1} = x^t_h$ with prob. $1/M$. The initial state of this environment is fixed at state $(0, 0)$.

The offline dataset is generated based on the expert's policy specified in \peqref{eq:expert}, and we assume $\beta(s)=\beta$ (a constant) across all states. We set the size of the offline dataset $\mathcal{D}_0$ as $ |\mathcal{D}_0| = \kappa |\cA| |\cS| $, where $\kappa \geq 0$ is referred to as \emph{data ratio}.
We fix the size of Deep Sea as $M=10$.  We run the experiment for $T=300$ episodes, and the empirical cumulative regrets are averaged over $50$ simulations. The experimental results are illustrated in Figure~\ref{fig:regret_vs_beta},
as well as Figure~\ref{fig:cum_regret} in Appendix~\ref{app:more_empirical}.

\begin{figure}[t]
  \centering
  \includegraphics[width=0.7\textwidth]{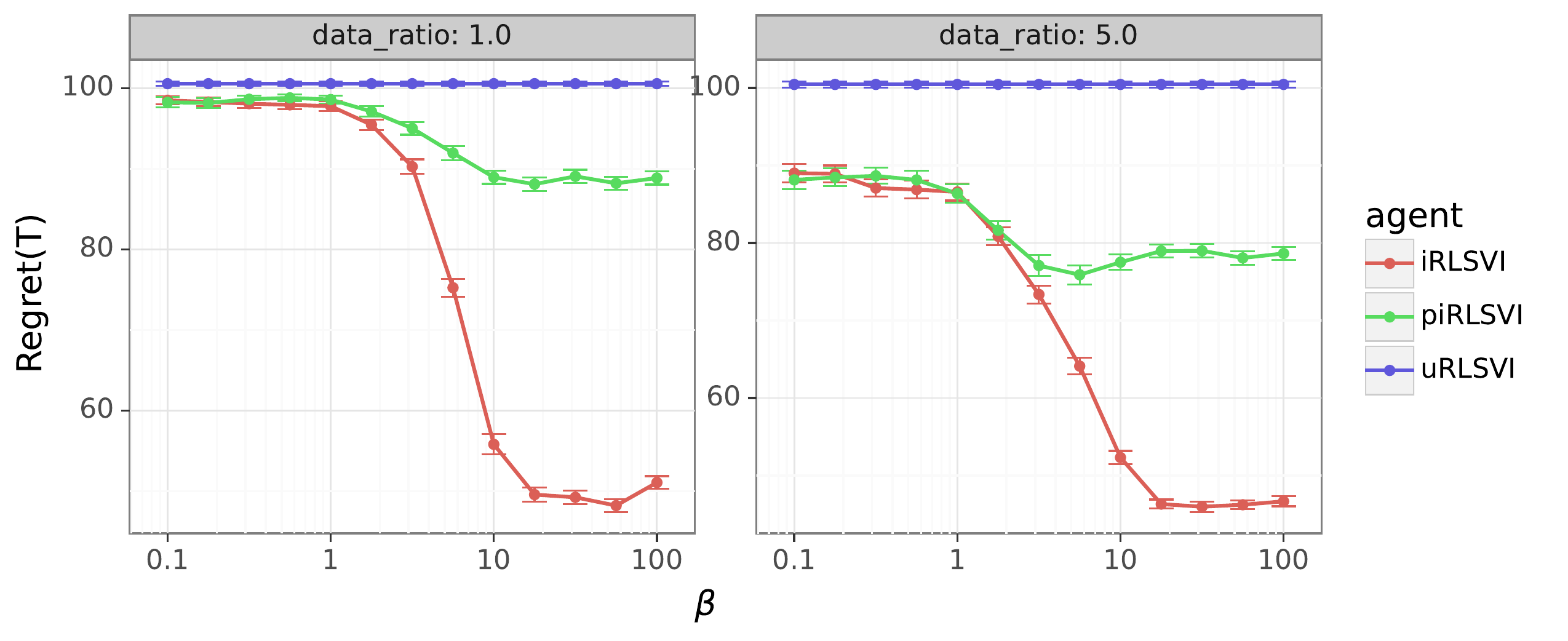}
  \caption{Cumulative regret vs. $\beta$ in Deep Sea.}
  \label{fig:regret_vs_beta}
\end{figure}

Specifically, Figure~\ref{fig:regret_vs_beta} plots the cumulative regret in the first $T=300$ episodes as a function of the expert's deliberateness $\beta$, for two different data ratios, $\kappa$ = 1,and 5. There are several interesting observations based on Figure~\ref{fig:regret_vs_beta}: (i) Figure~\ref{fig:regret_vs_beta} shows that $\irlsvi$ and $\pirlsvi$ tend to perform much better than $\urlsvi$, which demonstrates the advantages of exploiting the offline dataset, and this improvement tends to be more dramatic with a larger offline dataset. (ii) When we compare $\irlsvi$ and $\pirlsvi$, we note that their performance is similar when $\beta$ is small, but $\irlsvi$ performs much better than $\pirlsvi$ when $\beta$ is large. This is because when $\beta$ is small, the expert's generative policy does not contain much information; and as $\beta$ gets larger, it contains more information and eventually it behaves like imitation learning and learns the optimal policy as $\beta \rightarrow \infty$. Note that the error bars denote the standard errors of the empirical cumulative regrets, hence the improvements are statistically significant. 

\begin{figure}[t]
  \centering
  \includegraphics[width=0.7\textwidth]{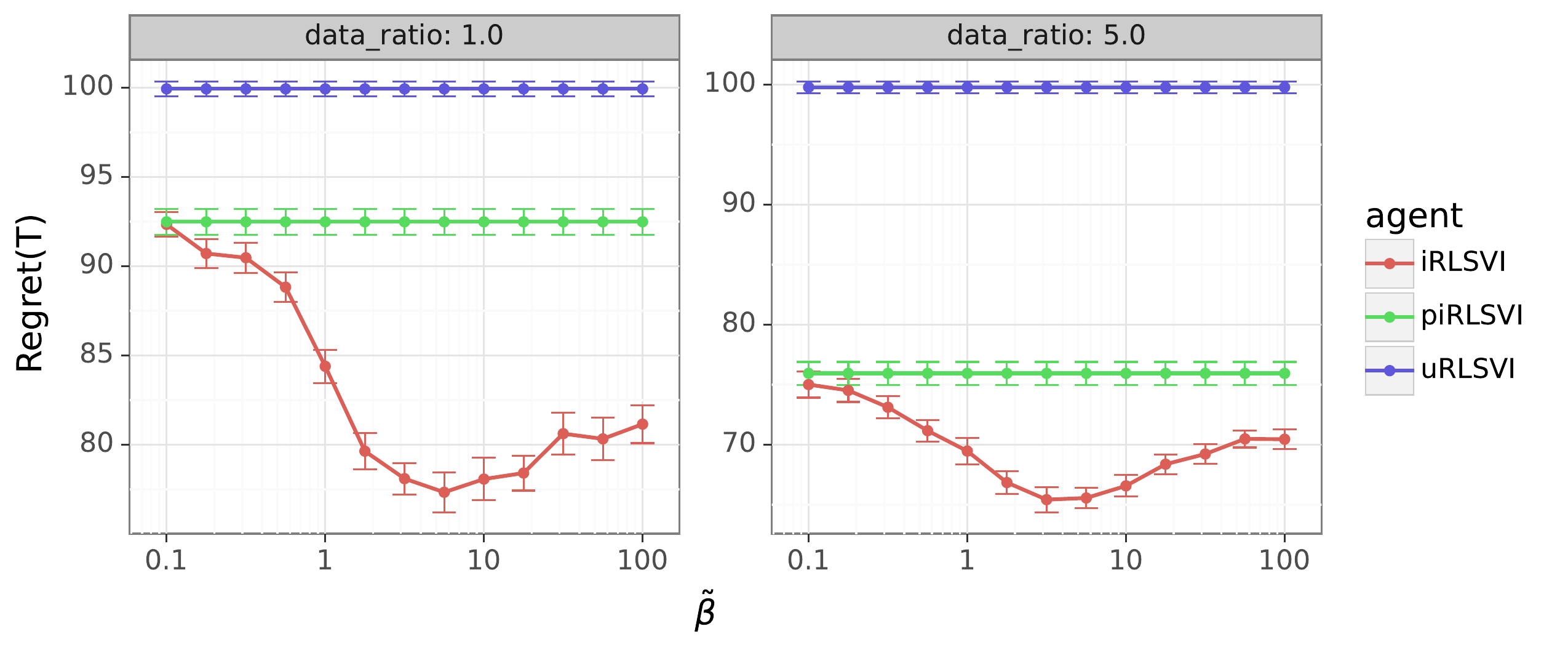}
  \caption{Robustness of $\irlsvi$~to misspecification.}
  \label{fig:regret_misspecification}
\end{figure}

\textbf{Robustness to misspecification of $\beta$.} We also investigate the robustness of various RLSVI agents with respect to the possible misspecification of $\beta$. In particular, we demonstrate empirically that in the Deep Sea environment with $M=10$, with offline dataset is generated by an expert with deliberateness $\beta=5$, the $\irlsvi$ agent is quite robust to moderate misspecification. Here, the misspecified deliberateness parameter is denoted $\tilde{\beta}$. The empirical results are illustrated in Figure~\ref{fig:regret_misspecification}, where the experiment is run for $T=300$ episodes and the empirical cumulative regrets are averaged over $50$ simulations.

Since $\urlsvi$ and $\pirlsvi$ do not use parameter $\tilde{\beta}$, thus, as expected, their performance is constant over $\tilde{\beta}$. On the other hand, $\irlsvi$ explicitly uses parameter $\tilde{\beta}$. As Figure~\ref{fig:regret_misspecification} shows, the performance of $\irlsvi$ does not vary much as long as $\tilde{\beta}$ has the same order of magnitude as $\beta$. However, there will be significant performance loss when $\tilde{\beta}$ is too small, especially when the data ratio is also small. This makes sense since when $\tilde{\beta}$ is too small, $\irlsvi$ will choose to ignore all the information about the generative policy and eventually reduces to $\pirlsvi$. 

\section{Conclusions}\label{sec:conclusions}

In this paper, we have introduced and studied the following problem: Given an offline demonstration dataset from an imperfect expert, what is the best way to leverage it to bootstrap online learning performance in MDPs. We have followed a principled approach and introduced two algorithms: the ideal $\ipsrl$~ algorithm, and the $\irlsvi$~ algorithm that is computationally practical and seamlessly bridges online RL and imitation learning in a very natural way. We have shown significant reduction in regret both empirically, and theoretically as compared to two natural baselines. The dependence of the regret bound on some of the parameters (e.g., $H$) could be improved upon, and is a good direction for future work. In future work, we will also combine the $\irlsvi$~algorithm with deep learning to leverage offline datasets effectively for continuous state and action spaces as well.

\bibliography{main}
\bibliographystyle{tmlr}

\newpage
\appendix

\section{Auxiliary Lemmas}
\begin{lemma}\label{lem:binomial}
    Let $X$ be the sum of $L$ i.i.d. Bernoulli random variables with mean $p\in (0, 1)$. Let $q\in (0, 1)$, then 
    \begin{align*}
        \Pr(X \leq qL) &\leq \exp\left(-2L(q-p)^2\right),\qquad\text{if }q < p,\\
        \Pr(X \geq qL) &\leq \exp\left(-2L(q-p)^2\right),\qquad\text{if }q > p.
    \end{align*}
\end{lemma}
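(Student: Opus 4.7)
The plan is to recognize this as the one-sided Hoeffding bound for sums of $[0,1]$-bounded independent random variables, applied to the Bernoulli$(p)$ case, and to derive it via the standard Chernoff–method argument.

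First I would set $X = \sum_{i=1}^L X_i$ with $X_i \in \{0,1\}$ i.i.d. Bernoulli$(p)$. For the upper-tail case ($q > p$), I would apply Markov's inequality to $e^{sX}$ for an arbitrary $s > 0$:
\begin{equation*}
\Pr(X \geq qL) = \Pr(e^{sX} \geq e^{sqL}) \leq e^{-sqL}\,\mathbb{E}[e^{sX}] = e^{-sqL}\bigl(1 - p + p e^{s}\bigr)^L.
\end{equation*}
The key inequality I would then use is Hoeffding's lemma applied to the centered Bernoulli: for $X_i - p \in [-p, 1-p]$, one has $\mathbb{E}[e^{s(X_i - p)}] \leq e^{s^2/8}$. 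This gives $\mathbb{E}[e^{sX}] \leq e^{sLp + Ls^2/8}$, and hence
\begin{equation*}
\Pr(X \geq qL) \leq \exp\!\bigl(-sL(q-p) + Ls^2/8\bigr).
\end{equation*}
Optimizing over $s$ by choosing $s = 4(q-p) > 0$ yields the bound $\exp(-2L(q-p)^2)$, as desired.

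The lower-tail case ($q < p$) is symmetric: apply the same argument to the random variables $1 - X_i$, which are Bernoulli$(1-p)$, and note that the event $\{X \leq qL\}$ is the same as $\{\sum_i (1-X_i) \geq (1-q)L\}$; since $1-q > 1-p$, the upper-tail bound just proved gives $\exp(-2L((1-q)-(1-p))^2) = \exp(-2L(q-p)^2)$.

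There is no real obstacle; the only care needed is in justifying the Hoeffding lemma bound on the centered Bernoulli MGF, which I would state as a standard fact (it is an easy calculus exercise bounding $\log(1-p+pe^s) - sp$ by $s^2/8$). If one prefers to avoid invoking Hoeffding's lemma, the same bound follows by directly computing the second derivative of $\psi(s) := \log(1-p+pe^s) - sp$ in $s$, observing that $\psi''(s) = \tfrac{(1-p)pe^s}{(1-p+pe^s)^2} \leq 1/4$, and integrating twice from $\psi(0) = \psi'(0) = 0$.
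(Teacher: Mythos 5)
Your proof is correct and matches the paper's approach: the paper simply invokes Hoeffding's inequality for both tails, and your argument is the standard Chernoff-method derivation of that inequality (with the correct optimization $s=4(q-p)$ and the correct reduction of the lower tail to the upper tail via $1-X_i$). No gaps.
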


\begin{proof}
    Both inequalities can be obtained by applying Hoeffding's Inequality.
\end{proof}

\newcommand{\probbound}{SH\left[\exp\left(-\dfrac{L\underline{p}^2}{18}\right) +\exp\left(-\dfrac{L\underline{p}}{36}\right) \right]}

\begin{lemma}\label{lem:piestimator}
    Let $\Delta$ and  $\underline{p}$ be as in Assumptions  \ref{assump:qgap} and \ref{assump:pgap} respectively and let $$\underline{\beta} := [\log 3 - \log \underline{p} + \log (H-1) + \log(A-1)]/\Delta.$$
    For any $\beta \geq \underline{\beta}$ and $N\in\mathbb{N}$, there exists an estimator $\hat{\pi}^*$ constructed from $\mathcal{D}_0$ that satisfies
    \begin{align*}
        \Pr(\pi^*\neq \hat{\pi}^*) &\leq \probbound .
    \end{align*}
\end{lemma}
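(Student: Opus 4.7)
The plan is to instantiate the two claims stated just before the lemma with the explicit choice $\delta = \underline{p}/2$, and then conclude by Lemma \ref{lem:binomial} (Hoeffding) plus a union bound over the at most $SH$ state-time pairs. The estimator $\hat{\pi}^*$ fails to match $\pi^*(\theta)$ at $(h,s)$ only if one of three bad events happens: (i) a reachable state has $N_h(s) < \delta L$; (ii) an unreachable state has $N_h(s) \geq \delta L$; (iii) a reachable state has $N_h(s) \geq \delta L$ but $\arg\max_a N_h(s,a) \neq \pi_h^*(s;\theta)$. I will bound each.

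The first step is to relate the expert's visitation distribution $q_h(s;\theta) := \Pr_{\theta,\pi^{\beta}(\theta)}(s_h=s)$ to $p_h(s;\theta)$ via a coupling. Build two trajectories on the same initial state and shared transition noise; at each period, sample a single $U\sim\mathrm{Unif}[0,1]$ and set both trajectories to play $\pi_h^*(s;\theta)$ whenever $U\leq \pi_h^{\beta}(a^*|s)$. By Assumption \ref{assump:qgap} and the softmax form,
$$\pi_h^{\beta}(a|s) \leq \exp\!\bigl(-\beta(Q_h(s,a^*;\theta)-Q_h(s,a;\theta))\bigr) \leq e^{-\beta\Delta} \quad \text{for every } a\neq a^*,$$
so if $\tau$ denotes the first time the $\pi^{\beta}$-trajectory deviates, a union bound gives $\Pr(\tau\leq h)\leq h(A-1)e^{-\beta\Delta}$. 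Since the two trajectories agree on $\{\tau>h\}$, this yields $q_h(s;\theta)\geq p_h(s;\theta)-h(A-1)e^{-\beta\Delta}$ and, in the unreachable case ($p_h(s;\theta)=0$), $q_h(s;\theta)\leq h(A-1)e^{-\beta\Delta}$. The choice of $\underline{\beta}$ is engineered so that $(H-1)(A-1)e^{-\beta\Delta}\leq \underline{p}/3$; therefore $q_h(s;\theta)\geq 2\underline{p}/3$ for every reachable $(h,s)$ and $q_h(s;\theta)\leq \underline{p}/3$ for every unreachable $(h,s)$.

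With $\delta=\underline{p}/2$, events (i) and (ii) require deviations of $N_h(s)$ from its mean by at least $L\underline{p}/6$, so Lemma \ref{lem:binomial} gives $\exp(-L\underline{p}^2/18)$ for each. For event (iii), conditional on $N_h(s)=n\geq \delta L$, the corresponding $n$ actions are i.i.d.\ $\pi_h^{\beta}(\cdot|s)$ (by episode independence), the argmax can fail only if $N_h(s,a^*)\leq n/2$, and $\pi_h^{\beta}(a^*|s)\geq 1-\underline{p}/(3(H-1))\geq 2/3$, so Hoeffding gives $\exp(-2n(1/6)^2)\leq \exp(-L\underline{p}/36)$. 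Summing over at most $SH$ pairs produces the stated bound. The main obstacle the coupling bypasses is that visits across periods are not independent and may pile up on off-optimal regions when $\beta$ is only moderately large; by reducing everything to ``how often does the softmax deviate at a single state,'' the remaining work is standard concentration. A minor additional wrinkle is matching the fixed tie-breaking conventions in the definitions of $\hat{\pi}^*$ and $\pi^*(\theta)$ at unreachable states, which I would absorb into event (iii) without affecting the order of the bound.
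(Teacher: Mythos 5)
Your proposal is correct and follows essentially the same route as the paper's proof: the same choice $\delta=\underline{p}/2$, the same three-way decomposition of the failure event, the same $2\underline{p}/3$ versus $\underline{p}/3$ separation of visitation probabilities driven by the per-step deviation bound $(A-1)e^{-\beta\Delta}$ and the definition of $\underline{\beta}$, and the same Hoeffding estimates yielding $\exp(-L\underline{p}^2/18)$ and $\exp(-L\underline{p}/36)$ before the union bound over $SH$ pairs. The only cosmetic difference is that you establish the visitation-probability separation by a coupling with the optimal-policy trajectory, whereas the paper uses an induction giving the multiplicative bound $(1-\tilde{\kappa}_\beta)^h p_h(s;\theta)$; both yield the same constants.
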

The proof is available in the appendix.

\section{Proof of Lemma \ref{lem:ipsrlsqrt}}

Let $\Tilde{\theta}^k$ the environment sampled for the $k$th episode, and $\Tilde{\pi}^k$ be the optimal strategy under $\Tilde{\theta}^k$. Recall that $\mathcal{H}_{k-1}$ is the data available to the iPSRL agent before the start of learning episode $k$. For notational simplicity, let $\Pr_k(\cdot) = \Pr(\cdot|\mathcal{H}_{k-1})$ and $\E_k[\cdot] = \E[\cdot|\mathcal{H}_{k-1}]$. We first prove the following Lemma. 

\begin{lemma}\label{lem:psmonotone}
    $\Pr(\Tilde{\pi}^k \neq \pi^*) \leq \Pr(\Tilde{\pi}^1 \neq \pi^* )$ for all $k\geq 1$.
\end{lemma}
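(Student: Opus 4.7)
\textbf{Proof proposal for Lemma \ref{lem:psmonotone}.}

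The plan is to exploit the defining property of posterior sampling, namely that conditional on the history $\mathcal{H}_{k-1}$ (and the offline data $\mathcal{D}_0$), the sampled parameter $\tilde{\theta}^k$ and the true parameter $\theta^*$ are independent and identically distributed, both having law $\mu_k$. Combined with the martingale property of posteriors, this will give monotonicity essentially for free via Jensen's inequality.

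First, let $\mathcal{F}_{k-1} := \sigma(\mathcal{D}_0, \mathcal{H}_{k-1})$, and for each deterministic Markov policy $\pi$ let
\begin{equation*}
    A_\pi := \{\theta \in \Theta : \pi^*(\theta) = \pi \},
\end{equation*}
where $\pi^*(\theta)$ is the optimal policy associated with model $\theta$ (with the fixed tie-breaking rule used earlier). By the posterior-sampling step in Algorithm \ref{algo:ipsrl}, the conditional law of $\tilde{\theta}^k$ given $\mathcal{F}_{k-1}$ is $\mu_k$, and by definition $\mu_k$ is also the conditional law of $\theta^*$ given $\mathcal{F}_{k-1}$; furthermore $\tilde{\theta}^k$ is conditionally independent of $\theta^*$ given $\mathcal{F}_{k-1}$. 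Therefore
\begin{equation*}
    \Pr(\tilde{\pi}^k = \pi^* \mid \mathcal{F}_{k-1}) = \sum_{\pi} \mu_k(A_\pi)^2 .
\end{equation*}

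Next, observe that for each fixed $A$, the sequence $\{\mu_k(A)\}_{k\geq 1}$ is a martingale with respect to $\{\mathcal{F}_{k-1}\}_{k\geq 1}$, since $\mu_k(A) = \E[\mathbf{1}\{\theta^*\in A\} \mid \mathcal{F}_{k-1}]$ and $\mathcal{F}_{k-2} \subseteq \mathcal{F}_{k-1}$. Applying the tower property and the conditional Jensen inequality to the convex function $x \mapsto x^2$,
\begin{equation*}
    \E\bigl[\mu_k(A_\pi)^2 \bigm| \mathcal{F}_{k-2}\bigr] \;\geq\; \bigl(\E[\mu_k(A_\pi) \mid \mathcal{F}_{k-2}]\bigr)^2 \;=\; \mu_{k-1}(A_\pi)^2.
\end{equation*}
Taking unconditional expectations and summing over $\pi$,
\begin{equation*}
    \Pr(\tilde{\pi}^k = \pi^*) = \E\Bigl[\sum_\pi \mu_k(A_\pi)^2\Bigr] \;\geq\; \E\Bigl[\sum_\pi \mu_{k-1}(A_\pi)^2\Bigr] = \Pr(\tilde{\pi}^{k-1} = \pi^*).
\end{equation*}
Hence $\Pr(\tilde{\pi}^k \neq \pi^*)$ is non-increasing in $k$, and induction from $k=1$ gives the claim.

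The only delicate point is the conditional independence / identical distribution claim in the first step: one must verify that the policy's internal randomization used to generate $\mathcal{H}_{k-1}$ can be taken independent of the posterior sample $\tilde{\theta}^k$, which holds by construction of the algorithm (fresh randomness is drawn at step (A2) of episode $k$). Once that is in place, the remainder is just the standard ``posteriors are martingales'' fact plus Jensen, so no further obstacle is anticipated.
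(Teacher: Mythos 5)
Your proof is correct and is essentially the paper's own argument: both rest on the fact that $\tilde{\theta}^k$ and $\theta^*$ are conditionally i.i.d.\ given the history, reducing the matching probability to a sum of squared posterior masses, and then apply Jensen together with the tower property of conditional expectation. The only cosmetic differences are that you work with the convex map $x\mapsto x^2$ on the match probability and chain one step at a time, whereas the paper applies the concave map $x\mapsto x(1-x)$ to the mismatch probability and conditions directly back to episode $1$; these are the same computation since $\sum_\pi \mu_k(A_\pi)\bigl(1-\mu_k(A_\pi)\bigr) = 1 - \sum_\pi \mu_k(A_\pi)^2$.
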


\begin{proof}
    Define $f(x) = x(1-x)$. $f$ is a concave function. We have
    \begin{align}
        \Pr(\Tilde{\pi}^k \neq \pi^* ) &= \E[\Pr_k[\Tilde{\pi}^k \neq \pi^*] ]= \E\left[ \sum_{\pi\in\varPi} \Pr_k(\Tilde{\pi}^k = \pi, \pi^* \neq \pi)  \right]\\
        &= \E\left[ \sum_{\pi\in\varPi} f(\Pr_k(\pi^* = \pi))  \right] \\
        &=  \E\left[\sum_{\pi\in\varPi} \E_1[f(\Pr_k(\pi^* = \pi))]\right]\leq \E\left[ \sum_{\pi\in\varPi} f(\E_1[\Pr_k(\pi^* = \pi)])\right]\\
        &= \E\left[\sum_{\pi\in\varPi} f(\Pr_1(\pi^* = \pi))\right]= \E\left[ \sum_{\pi\in\varPi} \Pr_1(\Tilde{\pi}^1 = \pi, \pi^* \neq \pi)  \right]\\
        &= \Pr(\Tilde{\pi}^1 \neq \pi^*)
    \end{align}
\end{proof}

Now we proceed to prove Lemma \ref{lem:ipsrlsqrt}.

    Let $J_{\pi}^\theta := \E_{\theta, \pi}[\sum_{h=0}^{H-1} r_h(s_h, a_h) + r_H(s_H)]$ denote the expected total reward under the environment $\theta$ and the Markov strategy $\pi$. Define
    \begin{align}
        Z_k &:= J_{\pi^*}^{\theta^*} - J_{\Tilde{\pi}^k}^{\theta^*}\\
        \Tilde{Z}_k &:= J_{\Tilde{\pi}^k}^{\Tilde{\theta}^k} - J_{\Tilde{\pi}^k}^{\theta^*}\\
        I_k &:= \bm{1}_{\{\Tilde{\pi}^k \neq \pi^* \}}
    \end{align}

    First, note that $Z_k = Z_k I_k$ with probability 1, hence
    \begin{align}
        \E_k[Z_k - \Tilde{Z}_k I_k ] &= \E_k[(Z_k - \Tilde{Z}_k) I_k] = \E_k[(J_{\pi^*}^{\theta^*} - J_{\Tilde{\pi}^k}^{\Tilde{\theta}^k})I_k] \\
        &= \E_k[J_{\pi^*}^{\theta^*} \bm{1}_{\{\Tilde{\pi}^k \neq \pi^* \}}  ] - \E_k[J_{\Tilde{\pi}^k}^{\Tilde{\theta}^k} \bm{1}_{\{\pi^* \neq \Tilde{\pi}^k \}}  ]= 0,
    \end{align}
    where the last equality is true since $\theta^*$ and $\Tilde{\theta}^k$ are independently identically distributed given $\mathcal{D}_k$. Therefore, we can write the Bayesian regret as $\E[\sum_{k=1}^T \Tilde{Z}_k I_k]$. By Cauchy-Schwartz inequality, we have
    \begin{align}
    \E\left[\sum_{k=1}^T\Tilde{Z}_k I_k \right]\leq \sqrt{\left(\sum_{k=1}^T \E[I_k^2] \right) \left( \sum_{k=1}^T \E[\Tilde{Z}_k^2]\right) }\label{eq:zkikcs}
    \end{align}

    Using Lemma \ref{lem:psmonotone}, the first part can be bounded by
    \begin{align}
        \sum_{k=1}^T \E[I_k^2] &= \sum_{k=1}^T \Pr(\Tilde{\pi}^k \neq \pi^*) \leq T\Pr(\Tilde{\pi}^1 \neq \pi^*) = \varepsilon T\label{eq:sumik}
    \end{align}

    The rest of the proof provides a bound on $\sum_{k=1}^T \E[\Tilde{Z}_k^2]$. 

    Let $\mathcal{T}_{\pi_h}^\theta$ be the Bellman operator at time $h$ defined by $\mathcal{T}_{\pi_h}^\theta V_{h+1}(s) := r_h(s, a) + \sum_{s'\in\mathcal{S}} V_{h+1}(s') P_h^\theta(s'|s, \pi_h(s))$. Using Equation (6) of \cite{osband2013more}, we have
    \begin{align}
        \Tilde{Z}_k = \E_{\theta^*, \Tilde{\theta}^k}\left[\sum_{h=0}^{H-1}[\mathcal{T}_{\Tilde{\pi}_h^k}^{\Tilde{\theta}^k} V_{h+1}^{\Tilde{\theta}^k} (s_{(k-1)H + h}) - \mathcal{T}_{\Tilde{\pi}_h^k}^{\theta^*} V_{h+1}^{\Tilde{\theta}^k} (s_{(k-1)H + h}) ]   \right]
    \end{align}
    
    For convenience, we write $\Tilde{P}_h^k = P_h^{\Tilde{\theta}^k}, P_h^* = P_h^{\theta^*}$, $s_{k, h} = s_{(k-1)H + h}$, and $a_{k, h} = a_{(k-1)H + h} = \Tilde{\pi}_h^k(s_{(k-1)H + h})$. Recall that the instantaneous reward satisfies $r_h(s, a)\in [0, 1]$. We have
    \begin{align*}
       |\mathcal{T}_{\Tilde{\pi}_h^k}^{\Tilde{\theta}^k} V_{h+1}^{\Tilde{\theta}^k} (s_{k, h}) - \mathcal{T}_{\Tilde{\pi}_h^k}^{\theta^*} V_{h+1}^{\Tilde{\theta}^k} (s_{k, h})| \leq H\|\Tilde{P}_h^k(\cdot|s_{k, h}, a_{k, h}) - P_h^*(\cdot|s_{k, h}, a_{k, h}) \|_1 
    \end{align*}

    Therefore,
    \begin{align}
        \E[\Tilde{Z}_k^2] &\leq H \E\left[\sum_{h=0}^{H-1}[\mathcal{T}_{\Tilde{\pi}_h^k}^{\Tilde{\theta}^k} V_{h+1}^{\Tilde{\theta}^k} (s_{k, h}) - \mathcal{T}_{\Tilde{\pi}_h^k}^{\theta^*} V_{h+1}^{\Tilde{\theta}^k} (s_{k, h}) ]^2   \right]\\
        &\leq H^3 \E\left[\sum_{h=0}^{H-1}\|\Tilde{P}_h^k(\cdot|s_{k, h}, a_{k, h}) - P_h^*(\cdot|s_{k, h}, a_{k, h}) \|_1 ^2 \right]
    \end{align}
    where we used Cauchy-Schwartz inequality for the first inequality.

    Following \cite{osband2013more}, define $N_k(s, a, h) = \sum_{l=1}^{k-1} \bm{1}_{\{(s_{l, h}, a_{l, h}) = (s, a) \} }$ to be the number of times $(s, a)$ was sampled at step $h$ in the first $(k-1)$ episodes. 

    \textbf{Claim:} For any $k, h$ and any $\delta > 0$,
    \begin{align}
        \E[\|\Tilde{P}_h^k(\cdot|s_{k, h}, a_{k, h}) - P_h^*(\cdot|s_{k, h}, a_{k, h}) \|_1 ^2  ]\leq  4\E\left[\dfrac{(2\log 2)S + 2\log(1/\delta)}{\max\{N_{k}(s_{k, h}, a_{k, h}, h), 1\}} \right] + 8(k-1)SA \delta 
    \end{align}

    Given the claim, we have
    \begin{align}
        &\quad~\sum_{k=1}^T\E[ \Tilde{Z}_k^2 ] \\
        &\leq H^3\left[4\E\left[\sum_{k=1}^T \sum_{h=0}^{H-1} \dfrac{(2\log 2)S + 2\log(1/\delta)}{\max\{N_{k}(s_{k, h}, a_{k, h}, h), 1\}}  \right] + \sum_{k=1}^T 8(k-1)SA\delta \right]\\
        &\leq 4H^3 \E\left[\sum_{k=1}^T \sum_{h=0}^{H-1} \dfrac{(2\log 2)S + 2\log(1/\delta)}{\max\{N_{k}(s_{k, h}, a_{k, h}, h), 1\}}  \right] + H^3( 4T^2 SA\delta)\\
        &= 8H^3[(\log 2)S + \log(1/\delta)] \E\left[\sum_{k=1}^T \sum_{h=0}^{H-1}(\max\{N_{k}(s_{k, h}, a_{k, h}, h), 1\})^{-1}  \right] + 4H^3 SAT^2\delta \label{eq:esum} 
    \end{align}

    It remains to bound the expectation term in \peqref{eq:esum}. We have
    \begin{align}
        &\quad~\sum_{k=1}^T \sum_{h=0}^{H-1} (\max\{N_{k}(s_{k, h}, a_{k, h}, h), 1\})^{-1}=\sum_{h=0}^{H-1} \sum_{s\in\mathcal{S}}\sum_{a\in\mathcal{A}} \sum_{j=0}^{N_T(s, a, h) - 1} \dfrac{1}{\max\{j, 1\}} \\
        &\leq\sum_{h=0}^{H-1} \sum_{(s, a): N_T(s, a, h) > 0} (2 + \log(N_T(s, a, h)) )\leq HSA(2 + \log(T))\label{eq:boundsuminterval}
    \end{align}

    Combining \peqref{eq:esum} and \peqref{eq:boundsuminterval}, setting $\delta = \dfrac{1}{T^2}$, we have
    \begin{align}
        &\quad~\sum_{k=1}^T\E[ \Tilde{Z}_k^2 ] \\
        &\leq 8H^4SA[(\log 2) S + \log(1/\delta)]\left[2 + \log(L)\right] + 4H^3 SAT^2\delta \\
        &= 8H^4SA[(\log 2) S + 2\log(T)]\left[2 + \log(T)\right] + 4H^3 SA\\
        &= O(H^4S^2A[1+\log(T)]^2)\label{eq:boundsumz2}
    \end{align}

    Combining \peqref{eq:zkikcs}\peqref{eq:sumik}\peqref{eq:boundsumz2}, we conclude that
    \begin{equation}
        \BR_T = O(\sqrt{\varepsilon H^4 S^2 AT} \log(1+T)) 
    \end{equation}

\begin{proof}[Proof of Claim]
    
    Let $\check{P}_h^{n}(\cdot|s, a)$ be the empirical distribution of transitions after sampling $(s, a)$ at step $h$ exactly $n$ times. By the $L_1$ concentration inequality for empirical distributions \cite{weissman2003inequalities}, we have
    \begin{align}
        \Pr_{\theta^*}(\|\check{P}_h^n(\cdot|s, a) - P_h^{\theta^*}(\cdot|s,a) \|_1 > \epsilon) \leq 2^{S}\exp\left(-\frac{1}{2}n \epsilon^2\right)
    \end{align}
    for any $\epsilon > 0$. For $\delta > 0$, define $\xi(n, \delta) := \frac{(2\log 2)S + 2\log(1/\delta)}{\max\{n, 1\}}$, we have
    \begin{align}
        \Pr_{\theta^*}\left(\|\check{P}_h^n(\cdot|s, a) - P_h^{\theta^*}(\cdot|s,a) \|_1^2 > \xi(n, \delta) \right) \leq \delta.
    \end{align}

    Define $\hat{P}_h^{k}(\cdot|s, a)$ to be the empirical distribution of transitions after sampling $(s, a)$ at step $h$ in the first $k-1$ episodes. Define
    \begin{align*}
        \hat{\Theta}_h^k &= \left\{\theta: \|\hat{P}_h^k(\cdot|s, a) - P_h^{\theta}(\cdot|s,a) \|_1^2\leq \xi(N_{k}(s, a, h), \delta)~~\forall s\in\mathcal{S}, a\in\mathcal{A} \right\}
    \end{align*}

    Then, we have
    \begin{align}
        &\quad~\Pr(\theta^*\not\in \hat{\Theta}_h^k) \\
        &\leq \Pr\left(\exists n \in [k-1], s\in\mathcal{S}, a\in\mathcal{A}~~ \|\check{P}_h^n(\cdot|s, a) - P_h^{\theta^*}(\cdot|s,a) \|_1^2 > \xi(n, \delta) \right)\\
        &\leq (k-1)SA \delta 
    \end{align}

    Since $\theta^*$ and $\Tilde{\theta}^k$ are i.i.d. given $\mathcal{D}_k$, and the random set $\hat{\Theta}_h^k$ is measurable to $\mathcal{D}_k$, we have $\Pr(\theta^*\not\in \hat{\Theta}_h^k) = \Pr(\Tilde{\theta}^k\not\in \hat{\Theta}_h^k)$. Therefore we have
    \begin{align}
        &\quad~\E\left[\|\Tilde{P}_h^k(\cdot|s_{k, h}, a_{k, h}) - P_h^*(\cdot|s_{k, h}, a_{k, h}) \|_1 ^2 \right]\\
        &\leq \E\left[\|\Tilde{P}_h^k(\cdot|s_{k, h}, a_{k, h}) - P_h^*(\cdot|s_{k, h}, a_{k, h}) \|_1 ^2 \bm{1}_{\{\theta^*, \Tilde{\theta}^k \in \hat{\Theta}_h^k \}}\right] + \\
        &+ \E\left[\|\Tilde{P}_h^k(\cdot|s_{k, h}, a_{k, h}) - P_h^*(\cdot|s_{k, h}, a_{k, h}) \|_1 ^2 \left(\bm{1}_{\{ \theta^* \not\in \hat{\Theta}_h^k\}} + \bm{1}_{\{ \Tilde{\theta}^k \not\in \hat{\Theta}_h^k\}}\right) \right] \\
        &\leq \E[4\xi(N_{k}(s_{k, h}, a_{k, h}, h), \delta)] + 4\left(\Pr(\theta^* \not\in \hat{\Theta}_h^k) + \Pr(\Tilde{\theta}^k \not\in \hat{\Theta}_h^k) \right)\\
        &\leq 4\E[\xi(N_{k}(s_{k, h}, a_{k, h}, h), \delta)] + 8(k-1)SA \delta 
    \end{align}
    
\end{proof}

\section{Proof of Lemma \ref{lem:piestimator}}
\begin{proof}
    For convenience, write $\Pr_\theta(\cdot) = \Pr(\cdot|\theta)$.
    
    Define the event $\mathcal{E}_{n, h} = \{\Bar{a}_{n, h}\neq a_{h}^*(\Bar{s}_{n, h};\theta) \}$, i.e. in the $n$-th round of demonstration, the expert did not take the optimal action at time $h$. Given the expert's randomized policy $\phi$, we have
    \begin{align}
        &\quad~\Pr_\theta(\mathcal{E}_{n, h}|\Bar{s}_{n,0:h},\Bar{a}_{n,0:h-1}) \\&= 1-\dfrac{1}{1 + \sum_{a\neq a_{h}^*(\Bar{s}_{n, h};\theta)}\exp(-\beta\Delta_h(\Bar{s}_{n, h}, a;\theta))}\\
        &\leq \sum_{a\neq a_{h}^*(\Bar{s}_{n, h};\theta)}\exp(-\beta\Delta_h(\Bar{s}_{n, h}, a;\theta))\\
        &\leq (A-1)\exp(-\beta\Delta)=:\Tilde{\kappa}_\beta.
    \end{align}

    Define $\kappa_\beta = (H-1)\Tilde{\kappa}_\beta$. Then $\beta\geq\underline{\beta}$ means that $\kappa_\beta\leq \underline{p}/3$.
    
    Consider each $(h, s)\in \{0,1,\cdots,H-1\} \times \mathcal{S}$, conditioning on $\theta$ there are two cases:
    \begin{itemize}
        \item If $p_h(s;\theta) > 0$ then
        \begin{align}
            \Pr_\theta (\Bar{s}_{n, h} = s) &\geq \left(1-\Tilde{\kappa}_\beta\right)^{h} p_h(s;\theta) \geq (1-\kappa_\beta) \underline{p} \geq \frac{2}{3}\underline{p}.\label{eq:probablestate}
        \end{align}
        
        The first inequality in \peqref{eq:probablestate} can be established via induction on $h$: First observe that $\Pr_\theta (\Bar{s}_{n, 0} = s) = p_0(s;\theta)$ for all $s\in\mathcal{S}$ by definition. Suppose that we have proved the statement for time $h$, i.e. $\Pr_\theta (\Bar{s}_{n, h} = s) \geq (1-\Tilde{\kappa}_\beta)^{h} p_h(s;\theta)$ for all $s\in\mathcal{S}$. Then we have
        \begin{align}
            p_{h+1}(s';\theta) &= \sum_{s\in\mathcal{S}} \Pr_\theta(s' |s, a_h^*(s;\theta) ) p_h(s;\theta)\label{eq:markovtransition}\\
            \Pr_\theta(\Bar{s}_{n, h+1} = s') &\geq \sum_{s\in\mathcal{S}} \phi_h^\beta(a_h^*(s;\theta)|s;\theta) \Pr_\theta(s' |s, a_h^*(s;\theta) ) \Pr_\theta(\Bar{s}_{n, h} = s)\\
            &\geq \sum_{s\in\mathcal{S}}\left(1-\Tilde{\kappa}_\beta\right) \Pr_\theta(s' |s, a_h^*(s;\theta) ) \Pr_\theta(\Bar{s}_{n, h} = s)\\
            &\geq \sum_{s\in\mathcal{S}}\left(1-\Tilde{\kappa}_\beta\right)^{h+1} \Pr_\theta(s' |s, a_h^*(s;\theta) ) p_h(s;\theta).\label{ineq:markovtransition}
        \end{align}
        The statement for $h+1$ then follows by comparing \peqref{eq:markovtransition} and \peqref{ineq:markovtransition}, establishing the induction step.
        
        \item If $p_h(s;\theta) = 0$, then if $\Bar{s}_{n, h} = s$, the expert must have chosen some action that was not optimal before time $h$ in the $n$-th round of demonstration. We conclude that
        \begin{align}
            \Pr_\theta(\Bar{s}_{n, h} = s) \leq \Pr_\theta\left(\bigcup_{\Tilde{h}=1}^{h-1} \mathcal{E}_{n, \Tilde{h}}\right) \leq \sum_{\Tilde{h}=1}^{h-1} \Pr_\theta(\mathcal{E}_{n, \Tilde{h}})\leq \kappa_\beta\leq \frac{1}{3}\underline{p}.\label{eq:inprobablestate}
        \end{align}
    \end{itemize}

    The above argument shows that there's a separation of probability between two types of state and time index pairs under the expert's policy $\phi^\beta(\theta)$: the ones that are probable under the optimal policy $\pi^*(\theta)$ and the ones that are not. Using this separation, we will proceed to show that when $L$ is large, we can distinguish the two types of state and time index pairs through their appearance counts in $\mathcal{D}_0$. This will allow us to construct a good estimator of $\pi^*$.

    Define $\hat{\pi}^*$ to be the estimator of $\pi^*$ constructed with $\delta = \underline{p}/2$. If $\hat{\pi}^*\neq \pi$, then either one of the following cases happens
    \begin{itemize}
        \item There exists an $(s, h)\in\mathcal{S}\times \{0,1,\cdots,H-1\}$ pair such that $p_h(s;\theta) > 0$ but $N_h(s) < \delta L$;
        \item There exists an $(s, h)\in\mathcal{S}\times \{0,1,\cdots,H-1\}$ pair such that $p_h(s;\theta) = 0$ but $N_h(s) \geq \delta L$;
        \item There exists an $(s, h)\in\mathcal{S}\times \{0,1,\cdots,H-1\}$ pair such that $p_h(s;\theta) > 0$ and $N_h(s) \geq \delta L$, but $\pi_h^*(s;\theta) = a_h^*(s;\theta)\neq \arg\max_{a}N_h(s, a)= \hat{\pi}_h^*(s)$;
    \end{itemize}

    Using union bound, we have
    \begin{align}
        &\quad~\Pr_\theta(\pi^*\neq \hat{\pi}^*)\\
        &\leq \sum_{(s, h): p_h(s;\theta) > 0} \Pr_\theta(N_h(s) < \delta L) + \sum_{(s, h): p_h(s;\theta) = 0}\Pr_\theta(N_h(s) \geq \delta L) \\
        &\quad~+ \sum_{(s, h): p_h(s;\theta) > 0} \Pr_\theta(N_h(s) \geq \delta L, a_h^*(s;\theta)\neq \arg\max_{a}N_h(s, a) ). \label{eq:errordecomp}
    \end{align}

    Let $\mathrm{Bin}(M, q)$ denote a binomial random variable with parameters $M\in\mathbb{N}$ and $q\in[0, 1]$. Notice that conditioning on $\theta$, each $N_h(s)$ is a binomial random variable with parameters $L$ and $\Tilde{p}_{\theta, h}(s):=\Pr_\theta(\Bar{s}_{1, h} = s)$. 
    
    Using \eqref{eq:probablestate} and Lemma \ref{lem:binomial}, we conclude that each term in the first summation of \eqref{eq:errordecomp} satisfies
    \begin{align}
        \Pr_\theta(N_h(s)< \delta L) &\leq \Pr(\mathrm{Bin}(L, 2\underline{p}/3) < (\underline{p}/2) L)\\
        &\leq \exp(-2L(\underline{p}/6)^2) = \exp\left(-\dfrac{L\underline{p}^2}{18}\right).
    \end{align}

    Using \eqref{eq:inprobablestate} and Lemma \ref{lem:binomial}, we conclude that each term in the second summation of \eqref{eq:errordecomp} satisfies
    \begin{align}
        \Pr_\theta(N_h(s)\geq \delta L) &\leq \Pr(\mathrm{Bin}(L, \kappa_\beta) \leq (\underline{p}/2) L)\\
        &\leq \exp\left(-2L\left(\frac{\underline{p}}{2} - \kappa_\beta\right)^2 \right)
        \leq \exp\left(-\dfrac{L\underline{p}^2}{18}\right). 
    \end{align}

    Again, using Lemma \ref{lem:binomial}, each term in the third summation of \eqref{eq:errordecomp} satisfies
    \begin{align}
        &\quad~\Pr_\theta(N_h(s) \geq \delta L, a_h^*(s;\theta)\neq \arg\max_{a}N_h(s, a) )\\
        &\leq \Pr_\theta(a_h^*(s;\theta)\neq \arg\max_{a}N_h(s, a) ~|~ N_h(s) \geq \delta L)\\
        &\leq \Pr_\theta(N_h(s) - N_h(s, a_h^*(s;\theta)) \geq N_h(s)/2~|~ N_h(s) \geq \delta L)\\
        &\leq \Pr_\theta(\mathrm{Bin}(N_h(s) , \Tilde{\kappa}_\beta) \geq N_h(s)/2 ~|~ N_h(s) \geq \delta L )\\
        &\leq \Pr_\theta(\mathrm{Bin}(N_h(s) , 1/3) \geq N_h(s)/2 ~|~ N_h(s) \geq \delta L )\\
        &\leq \E_\theta\left[\exp\left(-\dfrac{N_h(s)}{18}\right)~\Big|~N_h(s) \geq \delta L \right]\\
        &\leq \exp\left(-\dfrac{\delta L}{18}\right) = \exp\left(-\dfrac{L\underline{p}}{36}\right).
    \end{align}

    Combining the above we obtain
    \begin{align}
        \Pr_\theta(\pi^*\not\in \hat{\pi}^*)\leq \probbound .
    \end{align}
\end{proof}

\section{Empirical Results}

\subsection{Pseudo-codes for RLSVI agents}
\label{app:pseudo-code}
In this appendix, we provide the pseudo-codes for $\irlsvi$, $\pirlsvi$, and $\urlsvi$ used in experiments in Section~\ref{sec:empirical}, which are illustrated in Algorithm~\ref{algo:rlsvi-algs} and~\ref{algo:rlsvi-sample}. Note that all three agents are tabular RLSVI agents with similar posterior sampling-type exploration schemes. However, they differ in whether or not and how to exploit the offline dataset.
More precisely, they differ in use of the offline dataset $\mathcal{D}_0$ when computing the RLSVI loss $\tilde{\mathcal{L}}_{\text{RLSVI}}$ (see Algorithm~\ref{algo:rlsvi-algs} and \ref{algo:rlsvi-sample}), and if the total loss includes the loss associated with the expert actions $\tilde{\mathcal{L}}_{\text{IR}}$ (see Algorithm~\ref{algo:rlsvi-sample}). Table~\ref{tab:agent_difference} summarizes the differences.
In all experiments in this paper, we choose the algorithm parameters $\sigma_0^2=1$, $\sigma^2=0.1$, and $B=20$.

\renewcommand{\arraystretch}{1.2}
\begin{table}[h]
\begin{center}
\begin{tabular}{ |c|c|c| } 
 \hline
 agent & use $\mathcal{D}_0$ in $\tilde{\mathcal{L}}_{\text{RLSVI}}$? & include $\tilde{\mathcal{L}}_{\text{IR}}$?  \\ 
 \hline
 $\urlsvi$ & No & No \\ 
 $\pirlsvi$ & Yes & No \\
 $\irlsvi$ & Yes & Yes \\
 \hline
\end{tabular}
\end{center}
\caption{Differences between $\irlsvi$, $\pirlsvi$, and $\urlsvi$.}
\label{tab:agent_difference}
\end{table}
\renewcommand{\arraystretch}{1}

\begin{algorithm}[!h]
   \caption{iRLSVI agents for Deep Sea experiments}
   \label{algo:rlsvi-algs}
\begin{algorithmic}
   \STATE \textbf{Input:} algorithm parameter $\sigma_0^2, \sigma^2 > 0$, deliberateness parameter $\beta > 0$,
   offline dataset $\mathcal{D}_0$, offline buffer size $B$, agent type ${\tt agent}$
   \vspace{0.2cm}
   \IF{${\tt agent} = \urlsvi$}  
   \STATE initialize data buffer $\mathcal{D} $ as an empty set
   \ELSE 
   \STATE initialize data buffer $\mathcal{D} \leftarrow \mathcal{D}_0$
   \ENDIF
   \vspace{0.2cm}
   \FOR{$t = 1,\cdots,T$} 
	    \STATE sample state-action value function $\hat{Q}^t$ backwardly based on Algorithm~\ref{algo:rlsvi-sample}
	    \STATE sample initial state $s^t_0 \sim \nu$
        \FOR{$h=0,\cdots, H-1$}
        \STATE take action $a^t_h \sim \mathrm{Unif} \left( \argmax_a \hat{Q}^t_h(s^t_h, a) \right)$
        \STATE observe $(s^t_{h+1},r^t_h)$
        \STATE append $(s^t_h, a^t_h, h, s^t_{h+1}, r^t_h)$ to the data buffer $\mathcal{D}$
        \ENDFOR
	\ENDFOR
\end{algorithmic}
\end{algorithm}

\begin{algorithm}[!h]
   \caption{sample $\hat{Q}^t$}
   \label{algo:rlsvi-sample}
\begin{algorithmic}
   \STATE \textbf{Input:} algorithm parameter $\sigma_0^2, \sigma^2 > 0$, deliberateness parameter $\beta > 0$,
   offline dataset $\mathcal{D}_0$, data buffer $\mathcal{D}$, offline buffer size $B$, agent type ${\tt agent}$
   \vspace{0.2cm}
   \STATE set $\hat{Q}^t_{H+1} \leftarrow 0$
   \vspace{0.2cm}
   \FOR{$h = H, H-1, \cdots, 0$} 
   \STATE Let $Q \in \Re^{|\cS| \times |\cA|}$ denote the decision variable, then we define the RLSVI loss function
   \[
   \tilde{\mathcal{L}}_{\text{RLSVI}}(Q) = \frac{1}{2} \sum_{d =(s,a,h', s', r) \in \mathcal{D}} \left( Q(s, a) - (r + \omega_d ) - \max_{b} \hat{Q}^t_{h+1}(s', b) \right)^2 + \frac{1}{2 \sigma_0^2} \|Q - Q^{\mathrm{prior}} \|_{\mathrm{F}}^2,
   \]
   where for each data entry $d=(s,a,s', r) \in \mathcal{D}$, $\omega_d$ is i.i.d. sampled from $N(0, \sigma^2)$. Each element in $Q^{\mathrm{prior}} \in \Re^{|\cS| \times |\cA|} $ is i.i.d. sampled from $N(0, \sigma_0^2)$ and $\| \cdot \|_F$ denotes the Frobenius norm.
   \IF {${\tt agent}=\irlsvi$}
   \STATE sample a buffer of $B$ offline data tuple, $\mathcal{B}$, without replacement from the offline dataset $\mathcal{D}_0$
   \STATE define $\mathcal{B}_h = \left \{ (s, a, h', s, , r) \in \mathcal{B} : \, h' = h \right \}$
   \STATE define the loss function associated with expert actions as
   \[
   \tilde{\mathcal{L}}_{\text{IL}}(Q) = \sum_{(s, a, h',  s', r) \in \mathcal{B}_h } \left[\log \sum_b \exp \left(\beta Q(s, b) \right) - \beta Q(s, a) \right]
   \]
   \STATE choose $\hat{Q}^t_h \in \argmin_{Q} \left[ \tilde{\mathcal{L}}_{\text{RLSVI}}(Q) + \tilde{\mathcal{L}}_{\text{IL}}(Q) \right]$
   \ELSE
   \STATE choose $\hat{Q}^t_h \in \argmin_{Q}  \tilde{\mathcal{L}}_{\text{RLSVI}}(Q)$
   \ENDIF
   \ENDFOR
   \STATE \textbf{Return} $\hat{Q}^t = \left(\hat{Q}^t_0, \hat{Q}^t_1, \ldots, \hat{Q}^t_H \right )$
\end{algorithmic}
\end{algorithm}

\subsection{More empirical results on Deep Sea}
\label{app:more_empirical}

In this appendix, we provide more empirical results for the Deep Sea experiment described in Section~\ref{sec:empirical}. Specifically, for Deep Sea with size $M=10$, data ratio $\kappa=1, 5$, and expert's deliberateness $\beta=1, 10$, we plot the cumulative regret of $\irlsvi$, $\pirlsvi$, and $\urlsvi$ as a function of the number of episodes $t$ for the first $T=300$ episodes. The experiment results are averaged over $50$ simulations and are illustrated in Figure~\ref{fig:cum_regret}.

\begin{figure*}[h]
  \centering
  \includegraphics[width=0.85\textwidth]{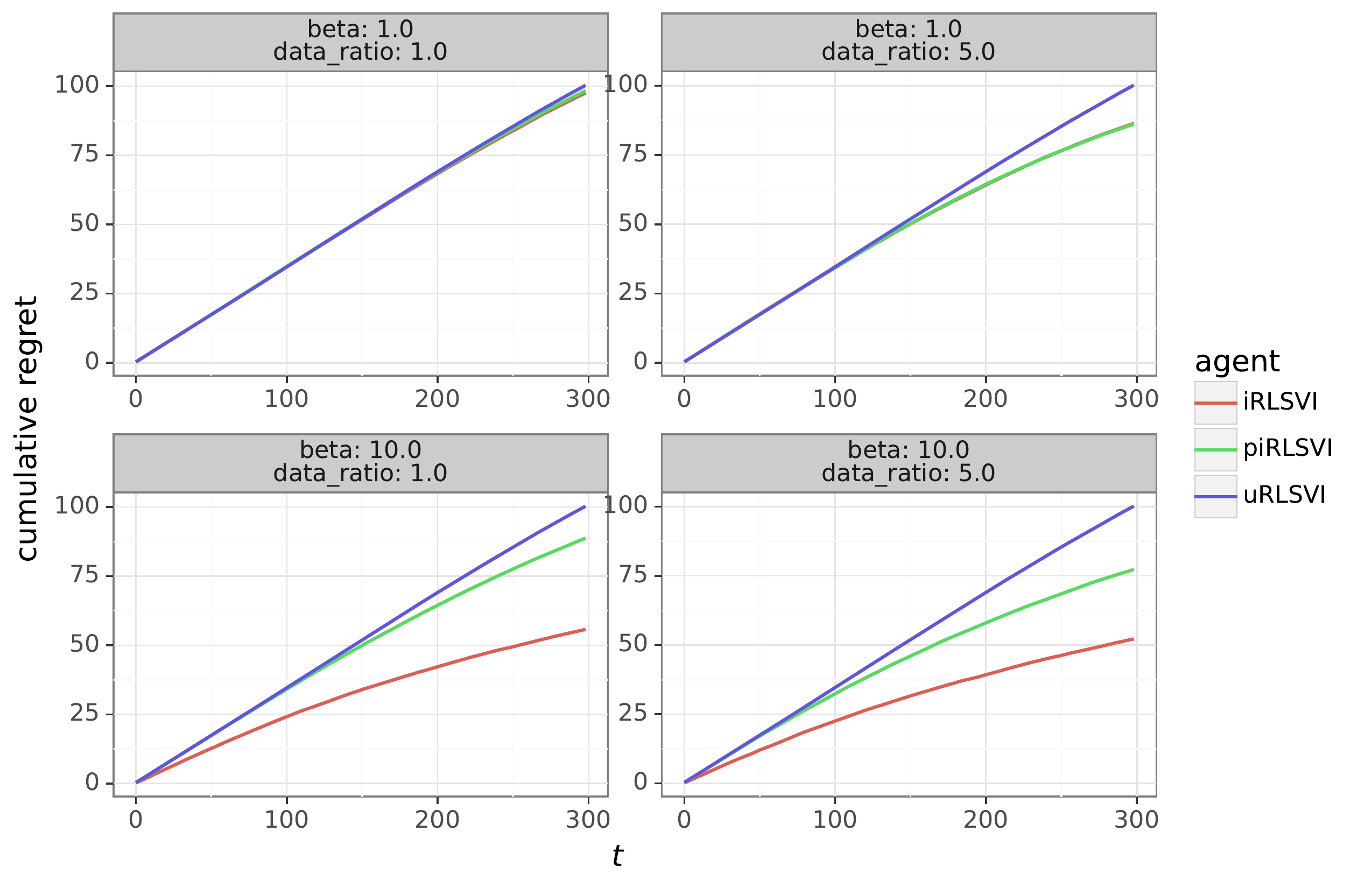}
  \caption{Cumulative regret vs. number of episodes in Deep Sea.}
  \label{fig:cum_regret}
\end{figure*}

As we have discussed in the main body of the paper, when both data ratio $\kappa$ and the expert's deliberateness $\beta$ are small, then there are not many offline data and the expert's generative policy is also not very informative. In this case, $\irlsvi$, $\pirlsvi$, and $\urlsvi$ perform similarly. On the other hand, when the data ratio $\kappa$ is large, $\irlsvi$ and $\pirlsvi$ tend to perform much better than $\urlsvi$, which does not use the offline dataset. Similarly, when the expert's deliberateness $\beta$ is large, then the expert's generative policy is informative. In this case, $\irlsvi$ performs much better than $\pirlsvi$ and $\urlsvi$.

\end{document}